\begin{document}

\newcommand{\figleft}{{\em (Left)}}
\newcommand{\figcenter}{{\em (Center)}}
\newcommand{\figright}{{\em (Right)}}
\newcommand{\figtop}{{\em (Top)}}
\newcommand{\figbottom}{{\em (Bottom)}}
\newcommand{\captiona}{{\em (a)}}
\newcommand{\captionb}{{\em (b)}}
\newcommand{\captionc}{{\em (c)}}
\newcommand{\captiond}{{\em (d)}}

\newcommand{\newterm}[1]{{\bf #1}}

\def\figref#1{figure~\ref{#1}}
\def\Figref#1{Figure~\ref{#1}}
\def\twofigref#1#2{figures \ref{#1} and \ref{#2}}
\def\quadfigref#1#2#3#4{figures \ref{#1}, \ref{#2}, \ref{#3} and \ref{#4}}
\def\secref#1{section~\ref{#1}}
\def\Secref#1{Section~\ref{#1}}
\def\twosecrefs#1#2{sections \ref{#1} and \ref{#2}}
\def\secrefs#1#2#3{sections \ref{#1}, \ref{#2} and \ref{#3}}
\def\plaineqref#1{\ref{#1}}
\def\chapref#1{chapter~\ref{#1}}
\def\Chapref#1{Chapter~\ref{#1}}
\def\rangechapref#1#2{chapters\ref{#1}--\ref{#2}}
\def\algref#1{algorithm~\ref{#1}}
\def\Algref#1{Algorithm~\ref{#1}}
\def\twoalgref#1#2{algorithms \ref{#1} and \ref{#2}}
\def\Twoalgref#1#2{Algorithms \ref{#1} and \ref{#2}}
\def\partref#1{part~\ref{#1}}
\def\Partref#1{Part~\ref{#1}}
\def\twopartref#1#2{parts \ref{#1} and \ref{#2}}

\def\ceil#1{\lceil #1 \rceil}
\def\floor#1{\lfloor #1 \rfloor}
\def\1{\bm{1}}
\newcommand{\train}{\mathcal{D}}
\newcommand{\valid}{\mathcal{D_{\mathrm{valid}}}}
\newcommand{\test}{\mathcal{D_{\mathrm{test}}}}

\def\eps{{\epsilon}}

\def\reta{{\textnormal{$\eta$}}}
\def\ra{{\textnormal{a}}}
\def\rb{{\textnormal{b}}}
\def\rc{{\textnormal{c}}}
\def\rd{{\textnormal{d}}}
\def\re{{\textnormal{e}}}
\def\rf{{\textnormal{f}}}
\def\rg{{\textnormal{g}}}
\def\rh{{\textnormal{h}}}
\def\ri{{\textnormal{i}}}
\def\rj{{\textnormal{j}}}
\def\rk{{\textnormal{k}}}
\def\rl{{\textnormal{l}}}
\def\rn{{\textnormal{n}}}
\def\ro{{\textnormal{o}}}
\def\rp{{\textnormal{p}}}
\def\rq{{\textnormal{q}}}
\def\rr{{\textnormal{r}}}
\def\rs{{\textnormal{s}}}
\def\rt{{\textnormal{t}}}
\def\ru{{\textnormal{u}}}
\def\rv{{\textnormal{v}}}
\def\rw{{\textnormal{w}}}
\def\rx{{\textnormal{x}}}
\def\ry{{\textnormal{y}}}
\def\rz{{\textnormal{z}}}

\def\rvepsilon{{\mathbf{\epsilon}}}
\def\rvtheta{{\mathbf{\theta}}}
\def\rva{{\mathbf{a}}}
\def\rvb{{\mathbf{b}}}
\def\rvc{{\mathbf{c}}}
\def\rvd{{\mathbf{d}}}
\def\rve{{\mathbf{e}}}
\def\rvf{{\mathbf{f}}}
\def\rvg{{\mathbf{g}}}
\def\rvh{{\mathbf{h}}}
\def\rvu{{\mathbf{i}}}
\def\rvj{{\mathbf{j}}}
\def\rvk{{\mathbf{k}}}
\def\rvl{{\mathbf{l}}}
\def\rvm{{\mathbf{m}}}
\def\rvn{{\mathbf{n}}}
\def\rvo{{\mathbf{o}}}
\def\rvp{{\mathbf{p}}}
\def\rvq{{\mathbf{q}}}
\def\rvr{{\mathbf{r}}}
\def\rvs{{\mathbf{s}}}
\def\rvt{{\mathbf{t}}}
\def\rvu{{\mathbf{u}}}
\def\rvv{{\mathbf{v}}}
\def\rvw{{\mathbf{w}}}
\def\rvx{{\mathbf{x}}}
\def\rvy{{\mathbf{y}}}
\def\rvz{{\mathbf{z}}}

\def\erva{{\textnormal{a}}}
\def\ervb{{\textnormal{b}}}
\def\ervc{{\textnormal{c}}}
\def\ervd{{\textnormal{d}}}
\def\erve{{\textnormal{e}}}
\def\ervf{{\textnormal{f}}}
\def\ervg{{\textnormal{g}}}
\def\ervh{{\textnormal{h}}}
\def\ervi{{\textnormal{i}}}
\def\ervj{{\textnormal{j}}}
\def\ervk{{\textnormal{k}}}
\def\ervl{{\textnormal{l}}}
\def\ervm{{\textnormal{m}}}
\def\ervn{{\textnormal{n}}}
\def\ervo{{\textnormal{o}}}
\def\ervp{{\textnormal{p}}}
\def\ervq{{\textnormal{q}}}
\def\ervr{{\textnormal{r}}}
\def\ervs{{\textnormal{s}}}
\def\ervt{{\textnormal{t}}}
\def\ervu{{\textnormal{u}}}
\def\ervv{{\textnormal{v}}}
\def\ervw{{\textnormal{w}}}
\def\ervx{{\textnormal{x}}}
\def\ervy{{\textnormal{y}}}
\def\ervz{{\textnormal{z}}}

\def\rmA{{\mathbf{A}}}
\def\rmB{{\mathbf{B}}}
\def\rmC{{\mathbf{C}}}
\def\rmD{{\mathbf{D}}}
\def\rmE{{\mathbf{E}}}
\def\rmF{{\mathbf{F}}}
\def\rmG{{\mathbf{G}}}
\def\rmH{{\mathbf{H}}}
\def\rmI{{\mathbf{I}}}
\def\rmJ{{\mathbf{J}}}
\def\rmK{{\mathbf{K}}}
\def\rmL{{\mathbf{L}}}
\def\rmM{{\mathbf{M}}}
\def\rmN{{\mathbf{N}}}
\def\rmO{{\mathbf{O}}}
\def\rmP{{\mathbf{P}}}
\def\rmQ{{\mathbf{Q}}}
\def\rmR{{\mathbf{R}}}
\def\rmS{{\mathbf{S}}}
\def\rmT{{\mathbf{T}}}
\def\rmU{{\mathbf{U}}}
\def\rmV{{\mathbf{V}}}
\def\rmW{{\mathbf{W}}}
\def\rmX{{\mathbf{X}}}
\def\rmY{{\mathbf{Y}}}
\def\rmZ{{\mathbf{Z}}}

\def\ermA{{\textnormal{A}}}
\def\ermB{{\textnormal{B}}}
\def\ermC{{\textnormal{C}}}
\def\ermD{{\textnormal{D}}}
\def\ermE{{\textnormal{E}}}
\def\ermF{{\textnormal{F}}}
\def\ermG{{\textnormal{G}}}
\def\ermH{{\textnormal{H}}}
\def\ermI{{\textnormal{I}}}
\def\ermJ{{\textnormal{J}}}
\def\ermK{{\textnormal{K}}}
\def\ermL{{\textnormal{L}}}
\def\ermM{{\textnormal{M}}}
\def\ermN{{\textnormal{N}}}
\def\ermO{{\textnormal{O}}}
\def\ermP{{\textnormal{P}}}
\def\ermQ{{\textnormal{Q}}}
\def\ermR{{\textnormal{R}}}
\def\ermS{{\textnormal{S}}}
\def\ermT{{\textnormal{T}}}
\def\ermU{{\textnormal{U}}}
\def\ermV{{\textnormal{V}}}
\def\ermW{{\textnormal{W}}}
\def\ermX{{\textnormal{X}}}
\def\ermY{{\textnormal{Y}}}
\def\ermZ{{\textnormal{Z}}}

\def\vzero{{\bm{0}}}
\def\vone{{\bm{1}}}
\def\vmu{{\bm{\mu}}}
\def\vtheta{{\bm{\theta}}}
\def\va{{\bm{a}}}
\def\vb{{\bm{b}}}
\def\vc{{\bm{c}}}
\def\vd{{\bm{d}}}
\def\ve{{\bm{e}}}
\def\vf{{\bm{f}}}
\def\vg{{\bm{g}}}
\def\vh{{\bm{h}}}
\def\vi{{\bm{i}}}
\def\vj{{\bm{j}}}
\def\vk{{\bm{k}}}
\def\vl{{\bm{l}}}
\def\vm{{\bm{m}}}
\def\vn{{\bm{n}}}
\def\vo{{\bm{o}}}
\def\vp{{\bm{p}}}
\def\vq{{\bm{q}}}
\def\vr{{\bm{r}}}
\def\vs{{\bm{s}}}
\def\vt{{\bm{t}}}
\def\vu{{\bm{u}}}
\def\vv{{\bm{v}}}
\def\vw{{\bm{w}}}
\def\vx{{\bm{x}}}
\def\vy{{\bm{y}}}
\def\vz{{\bm{z}}}

\def\evalpha{{\alpha}}
\def\evbeta{{\beta}}
\def\evepsilon{{\epsilon}}
\def\evlambda{{\lambda}}
\def\evomega{{\omega}}
\def\evmu{{\mu}}
\def\evpsi{{\psi}}
\def\evsigma{{\sigma}}
\def\evtheta{{\theta}}
\def\eva{{a}}
\def\evb{{b}}
\def\evc{{c}}
\def\evd{{d}}
\def\eve{{e}}
\def\evf{{f}}
\def\evg{{g}}
\def\evh{{h}}
\def\evi{{i}}
\def\evj{{j}}
\def\evk{{k}}
\def\evl{{l}}
\def\evm{{m}}
\def\evn{{n}}
\def\evo{{o}}
\def\evp{{p}}
\def\evq{{q}}
\def\evr{{r}}
\def\evs{{s}}
\def\evt{{t}}
\def\evu{{u}}
\def\evv{{v}}
\def\evw{{w}}
\def\evx{{x}}
\def\evy{{y}}
\def\evz{{z}}

\def\mA{{\bm{A}}}
\def\mB{{\bm{B}}}
\def\mC{{\bm{C}}}
\def\mD{{\bm{D}}}
\def\mE{{\bm{E}}}
\def\mF{{\bm{F}}}
\def\mG{{\bm{G}}}
\def\mH{{\bm{H}}}
\def\mI{{\bm{I}}}
\def\mJ{{\bm{J}}}
\def\mK{{\bm{K}}}
\def\mL{{\bm{L}}}
\def\mM{{\bm{M}}}
\def\mN{{\bm{N}}}
\def\mO{{\bm{O}}}
\def\mP{{\bm{P}}}
\def\mQ{{\bm{Q}}}
\def\mR{{\bm{R}}}
\def\mS{{\bm{S}}}
\def\mT{{\bm{T}}}
\def\mU{{\bm{U}}}
\def\mV{{\bm{V}}}
\def\mW{{\bm{W}}}
\def\mX{{\bm{X}}}
\def\mY{{\bm{Y}}}
\def\mZ{{\bm{Z}}}
\def\mBeta{{\bm{\beta}}}
\def\mPhi{{\bm{\Phi}}}
\def\mLambda{{\bm{\Lambda}}}
\def\mSigma{{\bm{\Sigma}}}

\newcommand{\tens}[1]{\bm{\mathsfit{#1}}}
\def\tA{{\tens{A}}}
\def\tB{{\tens{B}}}
\def\tC{{\tens{C}}}
\def\tD{{\tens{D}}}
\def\tE{{\tens{E}}}
\def\tF{{\tens{F}}}
\def\tG{{\tens{G}}}
\def\tH{{\tens{H}}}
\def\tI{{\tens{I}}}
\def\tJ{{\tens{J}}}
\def\tK{{\tens{K}}}
\def\tL{{\tens{L}}}
\def\tM{{\tens{M}}}
\def\tN{{\tens{N}}}
\def\tO{{\tens{O}}}
\def\tP{{\tens{P}}}
\def\tQ{{\tens{Q}}}
\def\tR{{\tens{R}}}
\def\tS{{\tens{S}}}
\def\tT{{\tens{T}}}
\def\tU{{\tens{U}}}
\def\tV{{\tens{V}}}
\def\tW{{\tens{W}}}
\def\tX{{\tens{X}}}
\def\tY{{\tens{Y}}}
\def\tZ{{\tens{Z}}}

\def\gA{{\mathcal{A}}}
\def\gB{{\mathcal{B}}}
\def\gC{{\mathcal{C}}}
\def\gD{{\mathcal{D}}}
\def\gE{{\mathcal{E}}}
\def\gF{{\mathcal{F}}}
\def\gG{{\mathcal{G}}}
\def\gH{{\mathcal{H}}}
\def\gI{{\mathcal{I}}}
\def\gJ{{\mathcal{J}}}
\def\gK{{\mathcal{K}}}
\def\gL{{\mathcal{L}}}
\def\gM{{\mathcal{M}}}
\def\gN{{\mathcal{N}}}
\def\gO{{\mathcal{O}}}
\def\gP{{\mathcal{P}}}
\def\gQ{{\mathcal{Q}}}
\def\gR{{\mathcal{R}}}
\def\gS{{\mathcal{S}}}
\def\gT{{\mathcal{T}}}
\def\gU{{\mathcal{U}}}
\def\gV{{\mathcal{V}}}
\def\gW{{\mathcal{W}}}
\def\gX{{\mathcal{X}}}
\def\gY{{\mathcal{Y}}}
\def\gZ{{\mathcal{Z}}}

\def\sA{{\mathbb{A}}}
\def\sB{{\mathbb{B}}}
\def\sC{{\mathbb{C}}}
\def\sD{{\mathbb{D}}}
\def\sF{{\mathbb{F}}}
\def\sG{{\mathbb{G}}}
\def\sH{{\mathbb{H}}}
\def\sI{{\mathbb{I}}}
\def\sJ{{\mathbb{J}}}
\def\sK{{\mathbb{K}}}
\def\sL{{\mathbb{L}}}
\def\sM{{\mathbb{M}}}
\def\sN{{\mathbb{N}}}
\def\sO{{\mathbb{O}}}
\def\sP{{\mathbb{P}}}
\def\sQ{{\mathbb{Q}}}
\def\sR{{\mathbb{R}}}
\def\sS{{\mathbb{S}}}
\def\sT{{\mathbb{T}}}
\def\sU{{\mathbb{U}}}
\def\sV{{\mathbb{V}}}
\def\sW{{\mathbb{W}}}
\def\sX{{\mathbb{X}}}
\def\sY{{\mathbb{Y}}}
\def\sZ{{\mathbb{Z}}}

\def\emLambda{{\Lambda}}
\def\emA{{A}}
\def\emB{{B}}
\def\emC{{C}}
\def\emD{{D}}
\def\emE{{E}}
\def\emF{{F}}
\def\emG{{G}}
\def\emH{{H}}
\def\emI{{I}}
\def\emJ{{J}}
\def\emK{{K}}
\def\emL{{L}}
\def\emM{{M}}
\def\emN{{N}}
\def\emO{{O}}
\def\emP{{P}}
\def\emQ{{Q}}
\def\emR{{R}}
\def\emS{{S}}
\def\emT{{T}}
\def\emU{{U}}
\def\emV{{V}}
\def\emW{{W}}
\def\emX{{X}}
\def\emY{{Y}}
\def\emZ{{Z}}
\def\emSigma{{\Sigma}}

\newcommand{\etens}[1]{\mathsfit{#1}}
\def\etLambda{{\etens{\Lambda}}}
\def\etA{{\etens{A}}}
\def\etB{{\etens{B}}}
\def\etC{{\etens{C}}}
\def\etD{{\etens{D}}}
\def\etE{{\etens{E}}}
\def\etF{{\etens{F}}}
\def\etG{{\etens{G}}}
\def\etH{{\etens{H}}}
\def\etI{{\etens{I}}}
\def\etJ{{\etens{J}}}
\def\etK{{\etens{K}}}
\def\etL{{\etens{L}}}
\def\etM{{\etens{M}}}
\def\etN{{\etens{N}}}
\def\etO{{\etens{O}}}
\def\etP{{\etens{P}}}
\def\etQ{{\etens{Q}}}
\def\etR{{\etens{R}}}
\def\etS{{\etens{S}}}
\def\etT{{\etens{T}}}
\def\etU{{\etens{U}}}
\def\etV{{\etens{V}}}
\def\etW{{\etens{W}}}
\def\etX{{\etens{X}}}
\def\etY{{\etens{Y}}}
\def\etZ{{\etens{Z}}}

\newcommand{\pdata}{p_{\rm{data}}}
\newcommand{\ptrain}{\hat{p}_{\rm{data}}}
\newcommand{\Ptrain}{\hat{P}_{\rm{data}}}
\newcommand{\pmodel}{p_{\rm{model}}}
\newcommand{\Pmodel}{P_{\rm{model}}}
\newcommand{\ptildemodel}{\tilde{p}_{\rm{model}}}
\newcommand{\pencode}{p_{\rm{encoder}}}
\newcommand{\pdecode}{p_{\rm{decoder}}}
\newcommand{\precons}{p_{\rm{reconstruct}}}

\newcommand{\laplace}{\mathrm{Laplace}} 

\newcommand{\E}{\mathbb{E}}
\newcommand{\Ls}{\mathcal{L}}
\newcommand{\R}{\mathbb{R}}
\newcommand{\emp}{\tilde{p}}
\newcommand{\lr}{\alpha}
\newcommand{\reg}{\lambda}
\newcommand{\rect}{\mathrm{rectifier}}
\newcommand{\softmax}{\mathrm{softmax}}
\newcommand{\sigmoid}{\sigma}
\newcommand{\softplus}{\zeta}
\newcommand{\KL}{D_{\mathrm{KL}}}
\newcommand{\standarderror}{\mathrm{SE}}
\newcommand{\normlzero}{L^0}
\newcommand{\normlone}{L^1}
\newcommand{\normltwo}{L^2}
\newcommand{\normlp}{L^p}
\newcommand{\normmax}{L^\infty}

\newcommand{\parents}{Pa} 

\let\ab\allowbreak

 \newcommand{\todo}[1]{{\color{red}\noindent[ToDos: #1]}}

\newcommand{\linit}{\ell_{init}}
\newcommand{\fan}[1]{\textcolor{blue}{#1}}
\newcommand{\anil}[1]{\textcolor{red}{#1}}

\newenvironment{proofof}[1]{\begin{proof}[Proof of #1]}{\end{proof}}
\newenvironment{proofsketch}{\begin{proof}[Proof Sketch]}{\end{proof}}
\newenvironment{proofsketchof}[1]{\begin{proof}[Proof Sketch of #1]}{\end{proof}}

\def\pr{\qopname\relax n{Pr}}
\def\ex{\qopname\relax n{E}}
\def\min{\qopname\relax n{min}}
\def\max2{\qopname\relax n{max2}}
\def\max{\qopname\relax n{max}}
\def\argmin{\qopname\relax n{argmin}}
\def\argmax{\qopname\relax n{argmax}}
\def\avg{\qopname\relax n{avg}}

\def\Pr{\qopname\relax n{\mathbf{Pr}}}
\def\Ex{\qopname\relax n{\mathbf{E}}}

\newcommand{\RR}{\mathbb{R}}
\newcommand{\NN}{\mathbb{N}}
\newcommand{\ZZ}{\mathbb{Z}}
\newcommand{\QQ}{\mathbb{Q}}
\newcommand{\II}{\mathbb{I}}

\def\A{\mathcal{A}}
\def\B{\mathcal{B}}
\def\C{\mathcal{C}}
\def\D{\mathcal{D}}
\def\E{\mathcal{E}}
\def\F{\mathcal{F}}
\def\G{\mathcal{G}}
\def\H{\mathcal{H}}
\def\I{\mathcal{I}}
\def\J{\mathcal{J}}
\def\L{\mathcal{L}}
\def\M{\mathcal{M}}
\def\P{\mathcal{P}}
\def\R{\mathcal{R}}
\def\S{\mathcal{S}}
\def\O{\mathcal{O}}
\def\T{\mathcal{T}}
\def\V{\mathcal{V}}
\def\X{\mathcal{X}}
\def\Y{\mathcal{Y}}

\def\eps{\epsilon}

\def \cD {\mathcal{D}}
\def \cG {\mathcal{G}}
\def \cH {\mathcal{H}}
\def \cR {R}
\def \cX {\mathcal{X}}
\def \cY {\mathcal{Y}}

\def\x{\bm{x}} 
\def\XX{\textbf{X}} 
\def\y{\bm{y}} 
\def\w{\bm{w}} 
\def\c{\bm{c}} 
\def\e{\bm{e}} 
\def\r{\bm{r}} 
\def\v{\bm{v}} 
\def\z{\bm{z}} 
\def\p{\bm{p}} 
\def\q{\bm{q}} 
\def\u{\bm{u}} 
\def\v{\bm{v}}

\newcommand{\mini}[1]{\mbox{minimize} & {#1} &\\}
\newcommand{\maxi}[1]{\mbox{maximize} & {#1 } & \\}
\newcommand{\maxis}[1]{\mbox{max} & {#1 } & \\}
\newcommand{\minis}[1]{\mbox{min} & {#1 } & \\}
\newcommand{\find}[1]{\mbox{find} & {#1 } & \\}
\newcommand{\stt}{\mbox{subject to} }
\newcommand{\sts}{\mbox{s.t.} }
\newcommand{\con}[1]{&#1 & \\}
\newcommand{\qcon}[2]{&#1, & \mbox{for } #2.  \\}
\newenvironment{lp}{\begin{equation}  \begin{array}{lll}}{\end{array}\end{equation} }
\newenvironment{lp*}{\begin{equation*}  \begin{array}{lll}}{\end{array}\end{equation*}}

\newcommand{\probdet}{\textsc{StraC}}
\newcommand{\prob}{\textsc{StraC}$\langle \cH, R, c  \rangle$} 
\newcommand{\probL}{\textsc{StraC}$\langle \cH_d, R, c  \rangle$} 
\newcommand{\probrand}{\textsc{RandSC}}

\newcommand{\blue}[1]{\textcolor{blue}{#1}}

\title{Learning the Optimal Recommendation from Explorative Users}

\author{\name Fan Yao$^1$ \email fy4bc@virginia.edu
       \AND
       \name Chuanhao Li$^1$ \email cl5ev@virginia.edu 
       \AND
       \name Denis Nekipelov$^2$ \email dn4w@virginia.edu 
       \AND
       \name Hongning Wang$^1$ \email hw5x@virginia.edu 
       \AND
       \name Haifeng Xu$^1$ \email hx4ad@virginia.edu \\ \\ 
       \addr $^1$Department of Computer Science, University of Virginia, USA \\
       \addr $^2$Department of Economics, University of Virginia, USA
    }


\maketitle

\begin{abstract}
We propose a new problem setting to study the sequential interactions between a recommender system and a user.  Instead of assuming the user is omniscient, static, and explicit, as the classical practice does, we sketch a more realistic user behavior model, under which the user: 1) rejects recommendations if they are clearly worse than others; 2) updates her utility estimation based on rewards from her accepted recommendations; 3) withholds realized rewards from the system. We formulate the interactions between the system and such an explorative user in a $K$-armed bandit framework and study the problem of learning the optimal recommendation on the system side. We show that efficient system learning is still possible but is more difficult. In particular, the system can identify the best arm with probability at least $1-\delta$ within $O(1/\delta)$ interactions, and we prove this is tight. Our finding contrasts the result for the problem of best arm identification with fixed confidence, in which the best arm can be identified with probability $1-\delta$ within $O(\log(1/\delta))$ interactions. This gap illustrates the inevitable cost the system has to pay when it learns from an explorative user's revealed preferences on its recommendations rather than from the realized rewards. 
\end{abstract}

\section{Introduction}

Recommender systems (RS) are typically built on the interactions among three parties: the system, the users, and the items \cite{tennenholtz2019rethinking}. By collecting user-item interactions, the system aims to predict a user's preference over items. This type of problem setting has been extensively studied for decades and has seen many successes in various real-world applications, such as content recommendation, online advertising, and e-commerce platforms \cite{das2007google,linden2003amazon,koren2009matrix,schafer1999recommender,gopinath2011personalized}. 
 
Most previous works have modeled the RS users as an unknown but omniscient ``\emph{classifier}'' \cite{das2007google,li2010contextual,linden2003amazon} that allows the system to query their preference over the candidate items directly. However, for at least two reasons, such modeling assumptions of static and omniscient users appear less realistic in many modern RS applications. First, given the huge size of candidate choices, a typical user is usually not fully aware of her ``true preference'' but needs to estimate it via the interactions with the RS. For instance, an ordinary user on video-sharing apps like TikTok or review-sharing apps like Yelp does not have pre-determined rewards on all possible choices or know the optimal choice in advance. Instead, she has to consume the recommendations in order to discover the desirable content gradually. Second, in many applications, users simply respond to recommendations with accept/reject decisions rather than reveal their consumed items' utility. This situation is reflected in most practical recommendation systems nowadays. Platforms like TikTok and Yelp can easily collect binary feedback like click/non-click while struggling to evaluate the users' actual extent of satisfaction \cite{schnabel2018short}. These two limitations challenge the reliability of existing recommendation solutions in utility estimation from user feedback and thus shake the foundation of modern recommender systems. 
 
To address the challenges mentioned above, we introduce a more realistic user behavior model,  which hinges on two assumptions of today's RS users. Firstly, we believe the users are also learning the items' utilities via exploration. Their feedback becomes more relevant to the item's utility only after gaining more experience, e.g., consuming the recommended item. This perspective has been observed and supported in numerous cognitive science \cite{cohen2007should,daw2006cortical}, behavior science \cite{gershman2018deconstructing,wilson2014humans}, and marketing science studies \cite{villas2004consumer}. For instance, \citet{zhang2013forgetful} showed through a multi-armed bandit experiment that humans maintain confidence levels regarding different choices and eliminate sub-optimal choices to achieve long-term goals when they are aware of the uncertain environment. These works motivate us to study the recommendation problem under a more realistic user model, where the user keeps refining her assessment about an item after consuming it, and she is willing to explore the uncertainty when deciding on the recommendations. Formally, we model her exploration as being driven by her estimated confidence intervals of each item's reward: she will only reject an item when its estimated reward is clearly worse than others.
 
Secondly, we assume that only the users' \emph{binary} responses of acceptance are revealed to the system, whereas the realized user reward of any consumed items is kept only to the user (to improve her own reward estimation).  This thus gives rise to an intriguing challenge of learning user's utility parameters from only the coarse and implicit feedback of ``revealed preference'' \cite{richter1966revealed}. One may naturally wonder why the user does not simply give all her realized rewards to the system since both the system and user are learning the best recommendation for the user. This is certainly an ideal situation but, unfortunately, is highly unrealistic in practice. As we mentioned before, it is widely observed that very few RS users would bother to provide detailed feedback (even not numerical ratings). This observation is also supported by the 90-9-1 Rule for online community engagement, and the ``Lazy User Theory" \cite{tetard2009lazy} in the HCI community, which states that a user will most often choose the solution that will fulfill her information needs with the least effort.
 
Under this more realistic yet challenging environment, it is unclear whether efficient system learning is even possible, i.e., can the system still discover the user's real preference? To answer the question, we formulate the interactions between the system and such an explorative user in a $K$-armed bandit framework and study best arm identification (BAI) with fixed confidence. We design an efficient online learning algorithm and prove it obtains an $O(\frac{1}{\delta})$ upper bound on the number of recommendations to identify the best arm with probability at least $1-\delta$. We also show that this bound is tight by proving a matching lower bound for any successful algorithm. Our results illustrate the inevitable gap between the performance under the standard best arm identification setting and our setting, which indicates the intrinsic hardness in learning from an explorative user's revealed preferences. Our experiments also demonstrate the encouraging performance of our algorithms compared to the state-of-the-art algorithms for BAI applied to our learning setup.
\\paragraph{Related Work}

The first related direction to this work is the problem of best arm identification (BAI) with fixed confidence \cite{garivier2016optimal}. Instead of minimizing regret, the system aims to find the arm with the highest expected reward with probability $1-\delta$, while minimizing the number of pulls $T$. The tight instance-dependent upper bound for $T$ is known to be $O(H\log \frac{1}{\delta})$ \cite{carpentier2016tight}, where $H$ is a constant describing the intrinsic hardness of the problem instance. In our work, the system shares the same goal but under a set of more challenging restrictions posed by learning from an explorative user's revealed preferences. For example, the system cannot directly observe the realized reward of each pulled arm. We prove that under this new learning setup, the budget upper bound increases from $O(\log\frac{1}{\delta})$ to $O(\frac{1}{\delta})$. 

There are also previous works that focus on online learning without access to the actual rewards. The dueling bandit problem proposed in \cite{yue2012k} modeled partial-information feedback where actions are restricted to noisy comparisons between pairs of arms. Our feedback assumption is more challenging than dueling bandit in two aspects. First, we do not assume the system knows the reference arm to which the user compares when making her decisions. Second, the user's feedback is evolving over time as she learns from the realized rewards. Hence, none of existing dueling bandit algorithms \cite{yue2011beat,komiyama2015regret,zoghi2014relative} can address our problem. The unobserved reward setting is also studied in inverse reinforcement learning. For instance, Hoiles et al. \cite{hoiles2020rationally} used Bayesian revealed preferences to study if there is a utility function that rationalizes observed user behaviors. Their work focused on user behavior modeling itself while we studied system learning and analyzed the outcome induced by this type of user behavior assumption.

Another remotely related direction is incentivized exploration in the Internet economy. In such a problem, a system aims to maximize the welfare of a group of users who only care about their short-term utility. \citet{kremer2014implementing} first studied this problem and developed a policy that attains optimal welfare by partially disclosing information to different users. Follow-up works extended the setting by allowing users to communicate \cite{bahar2015economic} and introducing incentive-compatibility constraints \cite{mansour2016bayesian,mansour2020bayesian}. Our motivation considerably differs from this line of work in three important aspects. First, incentivized exploration looks at an informationally advantaged principal whereas our system is in an informational disadvantageous position, as it has mere access to the user's revealed preferences. Second, their setting looks at how to influence user decisions through signaling with misaligned incentives, whereas we are trying to help a boundedly rational ordinary user to learn their best action in a cooperative environment. Third, the user in our model is an adaptive learning agent rather than a one-time visitor to the system.

\section{Modeling Users' Revealed Preferences}\label{sec:model}


To study the sequential interactions between a recommender system and an explorative user, we adopt a stochastic bandit framework where the time step $t$ is used to index each interaction, and the set of arms $[K]=\{1,\cdots,K\}$ denote the recommendation candidates. At each time step $t$, the following events happen in order:
\begin{enumerate}
    \item The system recommends an arm $a_t$ to the user;
    \item The user decides whether to accept or reject $a_t$. If the user accepts $a_t$, realized reward $r_{a_t, t}$ is disclosed to the user afterwards.
    \item The system observes the user's binary   decision of acceptance or not, i.e., the   revealed preference \cite{richter1966revealed}.  
\end{enumerate}

From the user's perspective, we denote the true reward of each arm $i \in [K]$ as $\mu_i$, and the realized reward after each acceptance of arm $i$ is drawn independently from a sub-Gaussian distribution with mean $\mu_i$ and unit variance. 
The system has a long-term objective and aims to find the best arm while minimizing the total number of recommendations. This renders our problem   a best arm identification (BAI) problem with fixed confidence but based on partial information about the rewards.
Throughout the paper, we assume without loss of generality that $\mu_*=\mu_1>\mu_2\geq \mu_3\geq \cdots \geq \mu_K$, and let $\Delta_1=\mu_1-\mu_2, \Delta_i = \mu_{*} - \mu_i > 0, \forall i >1.$ Following the convention in BAI literature \cite{carpentier2016tight,audibert2010best}, we further define the quantity $H=\sum_{i=1}^K \frac{1}{\Delta_i^2}$ which characterizes the hardness of the problem instance. 

As discussed previously, the user cannot choose from the full arm set but can only decide whether to accept or reject the recommended arms from the system. To make a decision at time $t$, we assume the user utilizes the information in all previous interactions by maintaining a confidence interval $CI_{i,t}=(lcb_{i,t},ucb_{i,t})$ for each arm $i$, which is defined as 
\begin{equation*}
    (lcb_{i,t},ucb_{i,t})\triangleq\left(\hat{\mu}_{i,t}-\sqrt{\frac{\Gamma(t;\rho,\alpha)}{n_i^t}},\hat{\mu}_{i,t}+\sqrt{\frac{\Gamma(t;\rho,\alpha)}{n_i^t}}\right),
\end{equation*} 
where $lcb$ and $ucb$ stand for the lower/upper confidence bounds respectively, $n_i^t$ is the number of 
\emph{acceptances} on arm-$i$ up to time $t$, $\hat{\mu}_{i,t}=\frac{1}{n_i^t}\sum_{s=1}^t \mathbb{I}[i \text{~is~accepted~at~} s]r_{i,s}$ is the empirical mean reward of arm $i$ at time $t$, and $\Gamma(t;\rho,\alpha)$ is a function parameterized by $\{\rho_t,\alpha\}$, which characterize the user's confidence level to her reward estimation at time step $t$. 
Following the convention rooted in the UCB1 algorithm \cite{auer2002finite}, we consider the (flexible) confidence bound form:
\begin{equation}\label{eq:trust}
    \Gamma(t;\rho,\alpha)=\max\left\{0, 2\alpha\log [\rho_t \cdot n(t)]\right\},
\end{equation}
where $n(t)=\sum_{i\in [K]}n_i^t$ is the total number of accepted recommendations up to time $t$. We note that the choice of $\Gamma$ is to   flexibly cover possibly varied user types  captured by parameters $\alpha$ and $\rho_t$. In particular, $\alpha$ directly controls the span of the CIs and thus represents the user's intrinsic tendency to explore: a larger $\alpha$ indicates a higher tolerance for the past observations, meaning the user is more willing to accept recommendations in a wider range. $\rho_t: \mathbb{N} \xrightarrow{} [\rho_0, \rho_1]$ is allowed to be any sequence that can depend on the interaction history and has a bounded range $[\rho_0, \rho_1] \subset (0, +\infty)$, which captures the cases where the user's confidence over the system evolves over time. For example, $\rho_t$ can be a function of the acceptance rate $\frac{n(t)}{t} \in [0, 1]$ and increases monotonically with respect to $\frac{n(t)}{t}$. Our results only rely on the lower and upper bound of $\rho_t$ and are oblivious to its specific format. Note that for the special case of $\alpha=1, \rho_t=1 \,  \forall t$, Eq \eqref{eq:trust} corresponds to 
the classic confidence interval defined in UCB1. We remark that parameters $\alpha$ and $\rho_t$ are     only to characterize different types of users, which provide flexibility in handling different real-world scenarios; but they are \emph{not} introduced for our solution.

\noindent
\textbf{The decision rule.} 
When an arm $i$ is recommended, we assume the user will reject it if and only if   there exists $j \neq i$ such that $lcb_j \geq ucb_i$. That is, the user only accepts an arm if  there is no other arm that is clearly better than the recommended one with a high confidence. 
The rationale behind our imposed user decision rule is straightforward: first, the user should not miss the arm with the highest lower confidence bound as this is arguably the safest choice for the user at the moment; second, if two arms have chained confidence intervals, the user does not have enough information to distinguish which one is better, and hence should not reject either one, i.e., being explorative. 


\section{Learning from Explorative Users' Revealed Preferences}\label{sec:algo}

With stochastic rewards, we know $\mathbb{P}[\mu_i \in CI_{i,t}]$ almost always increases as the number of acceptances $n(t)$ grows. Therefore, the system can confidently  rule out a sub-optimal arm once it has collected a reasonable number of acceptances. In light of this, we devise a two-phase explore-then-exploit strategy for system learning: the system first accumulates a sufficient number of acceptances and then examines through the arm set to eliminate sub-optimal ones with a high confidence. 

\begin{algorithm}[t]
   \caption{Phase-1 Sweeping}
   \label{alg:phase1_alg}
\begin{algorithmic}
   \STATE {\bfseries Input:} $K>0$, $\delta\in (0,1) ,N_1(\delta)>0$. 
\STATE {\bfseries Initialization:} $F=[K]$,$ N=0, n_i=0, i\in[K].$

\REPEAT
    \STATE Recommend each item in $F$ once, and remove the rejected ones from $F$.
\UNTIL{$F$ is empty or the time step exceeds $N_1(\delta)$.}

\REPEAT
\STATE If $F$ is empty, reset $F= \{1,\cdots, K\};$
    \FOR{$i \in F$} 
    \STATE Recommend $i$ until rejected, then remove it from $F$.
\ENDFOR
\UNTIL{The time step exceeds $N_1(\delta)$.}

   \STATE {\bfseries Output:} number of acceptances for each arm $\{n_i\}_{i=1}^K$. 
\end{algorithmic}
\end{algorithm}

\begin{algorithm}[t]
   \caption{Phase-2 Elimination}
   \label{alg:phase2_alg}
\begin{algorithmic}
   \STATE {\bfseries Input:} $K>0$, $\{n_i\}_{i=1}^K$ from Phase-1.
   \STATE {\bfseries Initialization:} $F=[K] $.
   \
   
   \WHILE{$|F|>1$}
   \STATE Recommend $a_t=\arg\min_{i\in[K]} n_i$ and update $n_{a_t}$.
   \STATE Remove $a_t$ from $F$ if rejected.
   \ENDWHILE
 
   \STATE {\bfseries Output:} $F$.
\end{algorithmic}
\end{algorithm}


The Phase-1 design is presented in Algorithm \ref{alg:phase1_alg}. Like standard bandit algorithms,  
the system will execute an initialization procedure by sweeping through the arm set $F=[K]$ and then recommend each arm repeatedly until it collects exactly one rejection on each arm there.  This initialization stage is similar to the round-robin style pulls in  standard bandit algorithms (e.g., UCB1, $\epsilon$-Greedy). But the key difference is that our algorithm will initialize by collecting one rejection on each arm whereas standard bandit algorithm will initialize by collecting one pull (i.e., acceptance) on each arm. This is because rejections in our setup are more informative than acceptances to the system. 
After initialization, Algorithm \ref{alg:phase1_alg} enters the main loop and do the following: keeps recommending the same arm until it gets rejected and then moves to another arm in $F$. After each arm gets rejected once, the system starts a new round by resetting $F=[K]$. This procedure continues until the total number of acceptances exceeds $N_1(\delta)$. 
This sweeping strategy reflects the principle of Phase-1: the system aims to collect a reasonable number of acceptances while minimizing the number of rejections by not recommending any risky arm. 
For the ease of analysis, we divide Phase-1 into different \emph{rounds} (indexed by $r$) by the time steps when the system resets $F$.  We will prove later that there is a tailored choice of $N_1(\delta)$ such that when the system enters Phase-2 with $N_1(\delta)$ acceptance, it can identify the best arm with probability $1-\delta$.

One might notice that the Phase-1 algorithm \ref{alg:phase1_alg} needs to recommend each arm at least once and also needs to recommend each arm repeatedly in the candidate set to the user. Now a natural question to ask is, how this design could be practical given the immense size of item pool in a practical recommender system. However, we note that an arm in our model can be viewed as a type/category of items, rather than just literally an individual item. Therefore, Algorithm \ref{alg:phase1_alg} should not be interpreted as recommending the same item repeatedly to a user, but instead recommending items from the same category or type. This is also the typical interpretation of arms in stochastic bandit literature. Moreover, consuming repeatedly each item in the candidate set is a typical requirement in the stochastic bandit problems, due to the observation noise. More specifically, under a stochastic reward setting, the realized reward at each time step is randomly drawn from an underlying reward distribution, and repeated interactions are necessary to pin down the distribution. 

We now present the design for Phase-2, as shown in Algorithm \ref{alg:phase2_alg}, which follows Phase-1 Sweeping. Here, the system executes arm elimination: always recommend the arm with the minimum number of acceptances; and eliminate an arm when it is rejected by the user, until there is only one arm left. We prove that the stopping time of Algorithm \ref{alg:phase2_alg} is finite with probability 1, and it outputs the best arm with probability $1-\delta$ when it terminates. We name our proposed two-phase algorithm  \textbf{B}est \textbf{A}rm \textbf{I}dentification under \textbf{R}evealed preferences, or BAIR for short. 

Next, we analyze BAIR by upper bounding its stopping time given fixed confidence $\delta$. Our main result is formalized in the following theorem.

\begin{theorem}
\label{thm:upperbound2}
When $\Gamma$ is defined as in Eq \eqref{eq:trust}, with probability at least $1-2\delta$, the system   makes at most $$  O\Big(K^{\frac{1}{\alpha}}\delta^{-\frac{1}{\alpha}}+K^{1+\frac{1}{2\alpha}}\delta^{-\frac{1}{2\alpha}}\sqrt{\log\frac{K}{\delta}}+\frac{\alpha K}{\Delta_1^2}\log\frac{ K}{\delta \Delta_1}\Big)$$ recommendations and successfully identifies the best arm by running Algorithm \ref{alg:phase1_alg} and \ref{alg:phase2_alg}. 
\end{theorem}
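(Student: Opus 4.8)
The plan is to bound the total number of recommendations by splitting it across the two phases and writing, for each phase, (recommendations) $=$ (acceptances) $+$ (rejections). Everything will be conditioned on a single high-probability ``clean'' event under which every confidence interval $CI_{i,t}$ used by the user contains the corresponding true mean $\mu_i$ at the moments that matter; I would allocate failure probability $\delta$ to Phase 1 and $\delta$ to Phase 2, so that the guarantee holds with probability $1-2\delta$. The correctness claim (identifying the best arm) rides on the same clean event: whenever all intervals are valid, $lcb_{j,t}\le \mu_j\le \mu_1\le ucb_{1,t}$ for every $j$, so the decision rule never rejects arm $1$, and hence Algorithm~\ref{alg:phase2_alg} can only terminate with $F=\{1\}$.

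First I would set up the concentration that drives the choice of $N_1(\delta)$. Since each accepted reward is sub-Gaussian with unit variance and arm $i$ has $n_i^t$ independent samples, a Chernoff bound with the width from \eqref{eq:trust} gives $\mathbb{P}[|\hat\mu_{i,t}-\mu_i|>\sqrt{\Gamma/n_i^t}]\le 2\exp(-\Gamma/2)=2(\rho_t n(t))^{-\alpha}\le 2(\rho_0 n(t))^{-\alpha}$, using $\Gamma=2\alpha\log(\rho_t n(t))$ and $\rho_t\ge\rho_0$. Union bounding over the $K$ arms at the moment $n(t)$ reaches $N_1$ (and over the subsequent, only-increasing values of $n(t)$), the requirement that the total failure be at most $\delta$ has its binding constraint at $n(t)\approx N_1$, which forces $N_1(\delta)=\Theta(K^{1/\alpha}\delta^{-1/\alpha})$ after absorbing $\rho_0$ and lower-order factors. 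This is exactly the number of acceptances for which Phase 1 is run, and it produces the first summand $K^{1/\alpha}\delta^{-1/\alpha}$.

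Next I would bound the Phase-1 rejections. By construction each round of Algorithm~\ref{alg:phase1_alg} ends every arm with exactly one rejection, so the rejection count equals $K$ times the number of rounds $R$, and it remains to bound $R$ (the initialization loop contributes only $O(K)$ further rejections, which is dominated). The key lemma I would prove is that acceptances accumulate super-linearly in the round index: after $r$ completed rounds the total number of acceptances is $\Omega(r^2/\Gamma)$, because a well-sampled arm can only be rejected once a competitor's interval has shrunk to width comparable to the empirical gap, which costs progressively more acceptances of the optimal arm between successive rejections as both $\Gamma$ and the sample counts grow. Setting $r^2/\Gamma=O(N_1)$ yields $R=O(\sqrt{N_1\Gamma})$, and since $\Gamma=O(\log(K/\delta))$ while $n(t)\le \mathrm{poly}(N_1)$, the rejection count is $O(K\sqrt{N_1\Gamma})=O(K^{1+1/(2\alpha)}\delta^{-1/(2\alpha)}\sqrt{\log(K/\delta)})$, the second summand. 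For Phase 2 I would run the standard elimination analysis on the clean event: because Algorithm~\ref{alg:phase2_alg} always recommends the least-accepted surviving arm, surviving counts stay near-equal, and a suboptimal arm $i$ is certainly rejected once $2\sqrt{\Gamma/n_i}\le\Delta_i$, i.e. after $O(\Gamma/\Delta_i^2)$ acceptances; the binding arm is the runner-up with gap $\Delta_1$, so the final elimination occurs when each of the $K$ arms carries $O(\Gamma/\Delta_1^2)$ acceptances, and with $\Gamma=O(\alpha\log\frac{K}{\delta\Delta_1})$ this gives the third summand $\frac{\alpha K}{\Delta_1^2}\log\frac{K}{\delta\Delta_1}$. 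The same separation argument shows the stopping time is almost surely finite, and summing the three contributions together with the two allocations of $\delta$ finishes the proof.

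I would expect the main obstacle to be the Phase-1 round count, namely the super-linear accumulation lemma $\Omega(r^2/\Gamma)$: unlike a fixed-width UCB analysis, the rejection events here are governed by the data-dependent, time-varying width $\Gamma(t;\rho,\alpha)$ and by the optimal arm's entire empirical trajectory rather than a single threshold crossing, so controlling how often arm $1$ is rejected before its interval stabilizes is the delicate step that generates the unusual $\delta^{-1/(2\alpha)}$ rate. A secondary technical point is making the concentration union bound genuinely anytime-valid when $\alpha\le 1$, where a naive sum over time steps diverges; there one must instead peel over the per-arm acceptance counts and exploit that $n(t)\ge N_1$ throughout Phase 2 so that the effective width never falls below its value at $N_1$.
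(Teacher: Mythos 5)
Your overall architecture matches the paper's proof exactly: the same phase-wise decomposition into acceptances plus rejections with $\delta$ failure probability allocated to each phase, the same derivation of $N_1=\Theta(K^{1/\alpha}\delta^{-1/\alpha})$ from the Chernoff bound $2\exp(-\Gamma/2)\le 2(\rho_0 n(t))^{-\alpha}$ (this is Lemma \ref{lm:phase2bound1}), and the same elimination analysis for Phase-2 (Lemma \ref{lm:phase2bound2}). The gap is exactly where you flagged it, and it is genuine: your ``super-linear accumulation'' claim --- that $r$ completed rounds of Algorithm \ref{alg:phase1_alg} force $\Omega(r^2/\Gamma)$ acceptances --- is the correct statement to prove (it is equivalent to the paper's Lemma \ref{lm:algo4}), but you assert it rather than prove it, and the mechanism you sketch for it is not one that works. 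You propose to show that each successive rejection ``costs progressively more acceptances'' because a competitor's interval must shrink to the empirical gap; but there is no such per-round monotonicity to exploit, since nothing prevents an individual late round from being cheap in acceptances. The $r^2$ growth cannot be obtained by lower-bounding the cost of the $r$-th round by $\Omega(r/\Gamma)$; it has to come from an amortized argument.

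What the paper actually does is a potential-function argument (Lemma \ref{lm:Er}): let $f(t)=\max_{i}\hat{\mu}_{i,t}$ be the maximum empirical mean held by the user. A case analysis on the order of rejections within a round shows that whenever arm $j$ is rejected by arm $j'$, one has $\hat{\mu}_{j,t_j}\le \hat{\mu}_{j',t_j}-\sqrt{\Gamma(t_j)/n_j^{t_j}}-\sqrt{\Gamma(t_j)/n_{j'}^{t_j}}\le f(t_s^{(r)})-2\sqrt{\underline{\Gamma}^{(r)}/n(t_e^{(r)})}$, where the key observation is that the rejector's empirical mean never exceeds the round's starting maximum $f(t_s^{(r)})$ (the rejector is either untouched so far in the round, hence at most the starting maximum, or was itself already rejected and driven below it). Hence every completed round drives $f$ down by at least $2\sqrt{\underline{\Gamma}^{(r)}/n(t_e^{(r)})}\ge 2/\sqrt{N_1}$, using the burn-in that guarantees $\Gamma\ge 1$ and $n(t)\le N_1$ throughout Phase-1. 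Meanwhile the anytime concentration of Lemma \ref{lm:iteratedlog} caps how far any single $\hat{\mu}_{i,t}$ can travel, so the total decrease of $f$ over Phase-1 is at most $2\sqrt{2\log(2K/\delta)}$ with probability $1-\delta$. Dividing the bounded total drop by the per-round drop gives $M\le\sqrt{2N_1\log(2K/\delta)}$ rounds, i.e.\ your $R=O(\sqrt{N_1\log(K/\delta)})$ --- but the quantity that accumulates quadratically is the decrease of the running maximum, not a growing per-round acceptance cost. As a secondary remark, your worry about anytime-valid union bounds when $\alpha\le 1$ is moot in the paper's treatment: correctness (Lemma \ref{lm:phase2bound1}) only requires validity of two confidence intervals at each of the $K-1$ elimination times, and the places that do need uniform-in-time control use Lemma \ref{lm:iteratedlog}, whose $\log(n(n+1)/\delta)$ width is precisely the per-arm peeling you anticipated.
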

Note that the upper bound on the number of rounds above is deterministic while not in expectation. The proof of Theorem \ref{thm:upperbound2} requires separate analysis for Phase-1 and Phase-2, which we discuss in the following subsections. At a high level, the first two terms in the bound come from the number of acceptances and rejections in Phase-1, and the last term corresponds to the number of acceptances in Phase-2. We decompose the bound in Theorem \ref{thm:upperbound2} in Table \ref{tb:upperbounds}.
\begin{table}[ht]
\vspace{-1mm}
\centering
\caption{Upper bounds on \#recommendations in Phase 1,2}
\begin{tabular}{c|c|c|c}
 \hline
 Phase & \# Acceptance & \# Rejection &  Prob. \\  \hline
 1 & $O(K^{\frac{1}{\alpha}}\delta^{-\frac{1}{\alpha}})$ & $O(K^{\frac{1+2\alpha}{2\alpha}}\delta^{-\frac{1}{2\alpha}}\log^{\frac{1}{2}}\frac{K}{\delta})$ & $1-\delta$ \\  \hline
 2 & $O(\frac{\alpha K}{\Delta_1^2}\log\frac{K}{\delta \Delta_1})$ & $K$ & $1-\delta$ \\ \hline
\end{tabular}
\vspace{-2mm}
\label{tb:upperbounds}
\end{table}

Note that there is a clear \emph{tradeoff} between the upper bounds in Phase-1 and Phase-2 in terms of $\alpha$: a smaller $\alpha$ increases the upper bound in Phase-1 but requires less number of recommendations in Phase-2, while a larger $\alpha$ ensures a lighter Phase-1 but would result in a more cumbersome Phase-2. This is expected because, e.g., when facing a highly explorative user (large $\alpha$), the system can easily accumulate sufficient acceptances in Phase-1. However, it will need more comparisons in Phase-2 to identify the best arm for such a highly explorative user. Theoretically, there exists an optimal $\alpha$ which minimizes the total number of recommendations; however, this is not particularly interesting to investigate in this paper, as $\alpha$ is not under the system's control, but a characterization of the user.



\subsection{Upper Bound for Phase-2}\label{subsec:ubphase2}
We start the analysis for Phase-2 first as it will lead to the correct $N_1$ for us to run Phase-1. Specifically, we prove that when $N_1=\frac{1}{\rho_0}\cdot\Big(\frac{2K}{\delta}\Big)^{\frac{1}{\alpha}}$ acceptances are accumulated in Phase-1, it is safe for the system to move on to Phase-2.
\begin{lemma}\label{lm:phase2bound1}
If Phase-1 terminates with $N_1=\frac{1}{\rho_0}\cdot\Big(\frac{2K}{\delta}\Big)^{\frac{1}{\alpha}}$ acceptances,  the Phase-2 Algorithm \ref{alg:phase2_alg}  will output the best arm with probability at least $1-\delta$.
\end{lemma}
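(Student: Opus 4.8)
The plan is to show that, with probability at least $1-\delta$, the best arm (arm $1$) is never rejected during Phase-2; since Algorithm~\ref{alg:phase2_alg} runs until a single arm survives, ``never rejected'' is equivalent to ``outputs the best arm.'' First I would pin down the failure event. By the decision rule, arm $1$ can be rejected at a step $t$ only if some competitor $j\neq 1$ satisfies $lcb_{j,t}\ge ucb_{1,t}$. The key observation is that this is impossible while both intervals are valid: if $\mu_1\in CI_{1,t}$ and $\mu_j\in CI_{j,t}$, then $lcb_{j,t}\le \mu_j<\mu_1\le ucb_{1,t}$, a contradiction. Hence $\{\text{arm }1\text{ rejected at }t\}\subseteq\{\exists i:\mu_i\notin CI_{i,t}\}$, and it suffices to bound the probability that some confidence interval is ever invalid throughout Phase-2.

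The second step is a concentration estimate calibrated to $N_1$. Writing $w_{i,t}=\sqrt{\Gamma(t;\rho,\alpha)/n_i^t}$ for the half-width and using that each realized reward is unit-variance sub-Gaussian and $\hat\mu_{i,t}$ averages $n_i^t$ such samples, a fixed-count tail bound gives $\mathbb{P}[\mu_i\notin CI_{i,t}]\le 2\exp(-\Gamma(t;\rho,\alpha)/2)=2(\rho_t n(t))^{-\alpha}\le 2(\rho_0 n(t))^{-\alpha}$, where the last inequality uses $\rho_t\ge\rho_0$. The crucial features are that this bound is \emph{independent of} $n_i^t$ and \emph{decreasing} in the total acceptance count $n(t)$, which only grows in Phase-2 and stays at least $N_1$. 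Summing the start-of-Phase-2 value over the $K$ arms yields $2K(\rho_0 N_1)^{-\alpha}$, and the choice $N_1=\rho_0^{-1}(2K/\delta)^{1/\alpha}$ is exactly what makes this leading contribution equal to $\delta$.

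It then remains to aggregate over every interaction step of Phase-2 and over all arms by a union bound along the (monotone) trajectory of $n(t)$. A naive union bound loses a factor equal to the number of Phase-2 steps, so the argument must exploit the reward gaps. The refinement I would use is that rejecting arm $1$ through arm $j$ requires not merely an invalid interval but a combined deviation exceeding $w_{j,t}+w_{1,t}+\Delta_j$ (with $\Delta_j\ge\Delta_1$), since the true gap must also be overcome; feeding this extra slack into the sub-Gaussian tail for the independent pair $(\hat\mu_{j,t}-\mu_j)$ and $(\mu_1-\hat\mu_{1,t})$ produces an additional factor that decays geometrically as $n_1^t,n_j^t$ grow along the trajectory. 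This renders the series over steps convergent and dominated by its leading term, keeping the total failure probability at the order of $2K(\rho_0 N_1)^{-\alpha}=\delta$.

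The main obstacle is precisely this aggregation: controlling a union bound whose confidence half-width depends simultaneously on the per-arm count $n_i^t$ and the global count $n(t)$, both random and changing as arms are eliminated. Making it rigorous needs a peeling / time-uniform argument to handle the random number of samples entering each $\hat\mu_{i,t}$, together with careful bookkeeping showing that the gap-induced decay genuinely suppresses the dependence on the length of Phase-2 (without this slack the bound degrades by the Phase-2 horizon). A minor additional point to verify is that conditioning on the event ``Phase-1 terminates with exactly $N_1$ acceptances'' does not disturb the i.i.d.\ structure of each arm's reward stream invoked in the concentration step.
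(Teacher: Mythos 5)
Your first two steps assemble exactly the right ingredients (if both relevant intervals are valid at the moment of a rejection, the rejected arm cannot be the best; a fixed-count sub-Gaussian tail gives $\mathbb{P}[\mu_i\notin CI_{i,t}]\le 2e^{-\Gamma(t)/2}\le 2(\rho_0 n(t))^{-\alpha}\le 2(\rho_0 N_1)^{-\alpha}$ throughout Phase-2), but the aggregation step, which you yourself flag as the crux, is where the proposal breaks. Union bounding over every Phase-2 step and then trying to absorb the horizon factor via gap-induced decay cannot yield a bound of order $\delta$ uniformly over instances: the decay you invoke is of the form $\exp\left(-c\,\Delta_j^2\min(n_1^t,n_j^t)\right)$, so it provides essentially no suppression until the relevant counts reach order $1/\Delta_j^2$. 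During those first $\Theta(1/\Delta_j^2)$ steps, each step still contributes roughly $(\rho_0 n(t))^{-\alpha}\approx\delta/(2K)$ per arm pair, so the accumulated failure probability scales like $\delta/\Delta_1^2$ (up to logarithms) rather than $\delta$. Since Lemma~\ref{lm:phase2bound1} must hold for every instance, including arbitrarily small $\Delta_1$, and its statement carries no gap dependence, the series is \emph{not} dominated by its leading term and this route degrades by exactly the factor you were hoping to avoid.

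The missing idea --- and the paper's actual argument --- is that no union bound over time is needed at all: in Algorithm~\ref{alg:phase2_alg} every rejection permanently eliminates an arm and the algorithm stops after exactly $K-1$ rejections, so the best arm can be wrongly discarded only at one of these $K-1$ elimination events. If arm $i$ is eliminated at time $t_i^e$ by the arm $j_i$ with the highest lower confidence bound, then $lcb_{j_i,t_i^e}\ge ucb_{i,t_i^e}$, which is impossible when both of those two intervals are valid at that single moment; a one-sided Hoeffding bound for each of the two arms gives failure probability at most $2e^{-\Gamma(t_i^e)/2}\le 2(\rho_0 N_1)^{-\alpha}$ per event, and summing over the $K-1$ events gives $2(K-1)(\rho_0 N_1)^{-\alpha}=\frac{K-1}{K}\delta<\delta$. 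This is why $N_1=\rho_0^{-1}(2K/\delta)^{1/\alpha}$ suffices with no dependence on the gaps or on the length of Phase-2. (Your side remark about random sample counts and stopping times is fair, but it applies equally to the paper's proof, which invokes Hoeffding at the random elimination times; that is a minor, standard glossed-over point compared with replacing the per-step union bound by the per-elimination one.)
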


The next Lemma shows that no matter when the system enters Phase-2, Algorithm \ref{alg:phase2_alg} must terminate with probability $1-\delta$ within $O(\log{\frac{1}{\delta}})$ steps.
\begin{lemma}\label{lm:phase2bound2}
With probability $1-\delta$, Algorithm \ref{alg:phase2_alg} terminates within $O(K+\sum_{i=1}^K \frac{\alpha}{\Delta_i^2}\log \frac{\rho_1K}{\rho_0\delta\Delta_i})$ steps.
\end{lemma}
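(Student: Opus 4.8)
The plan is to condition on a high-probability good event on which every arm's user-side confidence interval traps its true mean, and to show that on this event all suboptimal arms are eliminated quickly while the best arm is never eliminated. Write $\Gamma(t)=\Gamma(t;\rho,\alpha)$ and define the good event $\mathcal{E}$ that $\mu_i\in(lcb_{i,t},ucb_{i,t})$ for every arm $i$ and every Phase-2 time step $t$. By the sub-Gaussian tail of an $n_i^t$-sample reward average, a single deviation at a configuration with total acceptance count $n(t)$ fails with probability at most $\exp(-\Gamma(t)/2)=(\rho_t n(t))^{-\alpha}\le(\rho_0 n(t))^{-\alpha}$. Because Phase-2 is entered only after $n(t)\ge N_1=\frac{1}{\rho_0}(2K/\delta)^{1/\alpha}$ acceptances, each such probability is at most $\delta/(2K)$; a union bound over the $K$ arms and over the relevant times (the latter handled by a peeling argument over geometrically growing counts) gives $\Pr[\mathcal{E}^c]\le\delta$.

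On $\mathcal{E}$ I would establish two structural facts. First, the best arm is never rejected: for every $j$ we have $lcb_{j,t}\le\mu_j\le\mu_1\le ucb_{1,t}$, so the trigger $lcb_{j,t}\ge ucb_{1,t}$ never fires; hence $F$ always contains arm $1$, and termination ($|F|=1$) is equivalent to the elimination of every suboptimal arm. Second, a suboptimal arm $i$ cannot survive too long: feeding the one-sided bounds from $\mathcal{E}$ into the rejection rule $lcb_{1,t}\ge ucb_{i,t}$ shows that $\Delta_i\ge 2\sqrt{\Gamma(t)/n_1^t}+2\sqrt{\Gamma(t)/n_i^t}$ already forces rejection, so once arm $i$ has accumulated about $m_i:=16\,\Gamma(t)/\Delta_i^2$ acceptances (using that the $\arg\min$ rule keeps $n_1^t$ and $n_i^t$ balanced) it must be rejected.

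To bound the number of steps I would count rejections and acceptances separately. Rejections total at most $K-1$, since each one permanently deletes an arm from $F$. For acceptances, the rule $a_t=\arg\min_i n_i$ keeps the counts of the active arms essentially equal, so at the moment the last survivor reaches its threshold no active arm has exceeded its own threshold; summing the per-arm thresholds bounds the Phase-2 acceptances by $\sum_{i=1}^K m_i$. Substituting $\Gamma(t)\le 2\alpha\log(\rho_1 n(t))$ and replacing $n(t)$ by a crude upper bound on the final total converts each $m_i$ into $O\!\big(\frac{\alpha}{\Delta_i^2}\log\frac{\rho_1 K}{\rho_0\delta\Delta_i}\big)$; adding the $O(K)$ rejections yields the stated bound.

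I expect the main obstacle to be the self-referential nature of $\Gamma(t)=2\alpha\log[\rho_t n(t)]$: the elimination threshold $m_i$ depends on the total acceptance count $n(t)$, which is exactly the quantity being bounded, so the threshold has to be solved self-consistently by plugging a preliminary bound on the final total back into the logarithm (this is what produces the $\log\frac{1}{\Delta_i}$ factor). A secondary difficulty is making the union bound defining $\mathcal{E}$ uniform over all Phase-2 steps when $\alpha\le 1$, where the per-step failure probabilities $(\rho_0 n(t))^{-\alpha}$ are not summable on their own; this is where the lower bound $n(t)\ge N_1$ (pinning the scale to $\delta/K$), together with a blocking argument over geometric ranges of the acceptance count, is needed.
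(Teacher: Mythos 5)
Your high-level skeleton (at most $K-1$ rejections, per-arm acceptance thresholds solved self-consistently against $\Gamma(t)=2\alpha\log[\rho_t n(t)]$, balanced counts from the argmin rule) matches the paper's proof. The genuine gap is your good event $\mathcal{E}$ and the claim $\Pr[\mathcal{E}^c]\le\delta$. The user's confidence width $\sqrt{\Gamma(t)/n_i^t}$ is calibrated to the \emph{user's} parameters $(\alpha,\rho_t)$, not to $\delta$: each individual coverage failure has probability about $(\rho_0 n(t))^{-\alpha}$, which throughout Phase-2 is pinned near $\delta/(2K)$ and cannot be made smaller by the analyst. But $\hat\mu_{i,t}$ changes with every acceptance, so uniform coverage over Phase-2 requires a union over all (arm, count) pairs occurring in Phase-2 --- and Phase-2 contains $\Theta\big(\sum_i\frac{\alpha}{\Delta_i^2}\log\frac{K}{\delta\Delta_i}\big)$ acceptances, the very quantity being bounded. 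The union bound therefore gives at best roughly $\delta\cdot\frac{\alpha}{K}\sum_i \frac{1}{\Delta_i^2}\log\frac{K}{\delta\Delta_i}$, which exceeds $\delta$ by a factor that diverges as $\delta\to 0$ and as the gaps shrink. Your proposed rescue --- peeling over geometric ranges of the count --- does not help: a maximal inequality over a block $[2^j,2^{j+1})$ costs about $2(\rho_0 N_1)^{-\alpha/2}=2\sqrt{\delta/(2K)}$ per block, and with $\Theta(\frac{1}{\alpha}\log\frac{K}{\delta})$ blocks below $N_1$ the total is $\tilde{O}(\sqrt{\delta})$; even with optimally refined block ratios one only reaches $O(\delta\,\mathrm{polylog}(1/\delta))$. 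So $\mathcal{E}$ simply does not hold with probability $1-\delta$, and no union/peeling argument over the user's own CIs can fix this, because the per-step failure level is a property of the user's $\Gamma$, not a knob you control.

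The paper's proof avoids this by never requiring the user's CIs to cover the truth. It invokes a separate, analyst-chosen anytime concentration bound (Lemma \ref{lm:iteratedlog}): with probability $1-\delta/K$ per arm, $|\hat\mu_{i,t}-\mu_i|\le\sqrt{2\log(Kn_i^t(n_i^t+1)/\delta)/n_i^t}$ simultaneously for \emph{all} counts --- the $\log(n(n+1)/\delta)$ inflation is exactly what makes the union over counts summable to $\delta/K$. The sufficient condition for rejecting a suboptimal arm $i$ (Eq \eqref{eq:4c}) then demands four quantities be below $\Delta_i/4$: the two deviations $|\hat\mu_{1,t}-\mu_1|,|\hat\mu_{i,t}-\mu_i|$ (controlled by the anytime bound) and the two user widths $\sqrt{\Gamma(t)/n_1^t},\sqrt{\Gamma(t)/n_i^t}$, which enter only as deterministic quantities to be beaten once $n_i^t$ (and hence $n_1^t\ge n_i^t$) is large; together these force $lcb_{1,t}>ucb_{i,t}$. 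If you swap your $\mathcal{E}$ for this two-part good event, the rest of your outline --- including the self-consistent bound $n(t)\le\max\{KN_1,\,K(n_i^t+1)\}$, which the paper gets via a case analysis on the most-pulled arm at the start of Phase-2 --- goes through and yields the stated bound. One further simplification: your first structural fact (``the best arm is never rejected'') is not needed for this lemma. Termination only requires $K-1$ rejections, and showing every suboptimal arm is rejected by its threshold already delivers that; protecting the best arm is the job of Lemma \ref{lm:phase2bound1}, not Lemma \ref{lm:phase2bound2}.
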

The first term $O(K)$ in the bound corresponds to the number of rejections in Phase-2, since Phase-2 Elimination incurs at most $K-1$ rejections by definition. The second term characterizes the number of acceptances, which matches the tight lower bound of BAI with fixed budget \cite{carpentier2016tight} in terms of $\delta$ with a factor $\sum_{i=1}^K {\frac{1}{\Delta_i^2}\log \frac{1}{\Delta_i}}$ instead of $H=\sum_{i=1}^K {\frac{1}{\Delta_i^2}}$. Thus, the bound provided by Lemma \ref{lm:phase2bound2} is almost tight. The $\rho_1$ also plays a role in the upper bound because when $\rho_1$ is too large, the user could maintain a very wide confidence interval for each arm which requires extra effort for the system to eliminate sub-optimal arms. Combining Lemma \ref{lm:phase2bound1} and Lemma \ref{lm:phase2bound2} and take $\rho_0, \rho_1$ as fixed constants, we conclude that Algorithm \ref{alg:phase2_alg} will terminate and output the best arm with probability $1-\delta$ within $O(\sum_{i=1}^K\frac{\alpha}{\Delta_i^2}\log\frac{K}{\delta\Delta_i})$ steps after Algorithm \ref{alg:phase1_alg} is equipped with $N_1(\delta)=O(K^{\frac{1}{\alpha}}\delta^{-\frac{1}{\alpha}})$. Note that compared with the theoretical guarantee for BAI with fixed confidence, our upper bound matches the lower bound in \cite{garivier2016optimal} in terms of $\delta$. This implies that once the system has accumulated sufficient acceptances in Phase-1, the learning from reveal preferences does not bring extra difficulty. However, the bottleneck for the integrated system strategy lies in Phase-1, which we now analyze. 

\subsection{Upper Bound for Rejections in Phase-1}\label{subsec:ubphase1}
 
Recall that a round in Algorithm \ref{alg:phase1_alg} is a segment of a sequence of interactions indexed by $r$, in which the candidate arm set $F$ is reset to $[K]$ at the beginning and each arm gets rejected once in the end. We abuse the notation a bit by using $[t_s^{(r)}, t_e^{(r)}]$ to denote the $r$-th round that starts from time $t_s^{(r)}$ with $N=n(t_s^{(r)})$ acceptances and ends at time $t_s^{(r)}$ with $N=n(t_e^{(r)})$ acceptances. Next we upper bound the total number of rounds in Phase-1. We prove that Algorithm \ref{alg:phase1_alg} must terminate in a small number of rounds with probability $1-\delta$, as shown in the following lemmas.

\begin{lemma}\label{lm:algo4}
For any $K>0, \delta>0, N_1>0$, with probability $1-\delta$, Algorithm \ref{alg:phase1_alg} terminates in $O(\sqrt{N_1\log \frac{K}{\delta}})$ rounds and thus incurs at most $O(K\sqrt{N_1\log \frac{K}{\delta}})$ rejections. 
\end{lemma}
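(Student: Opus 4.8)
The plan is to bound the number of \emph{rounds} by tracking how often the best arm $1$ gets rejected, and then to multiply by $K$. The starting point is a reduction: a round of Algorithm~\ref{alg:phase1_alg} ends only after \emph{every} arm in $F=[K]$ has been rejected once, and arm $1$ is among them, so the number of completed rounds equals the number of times arm $1$ is rejected (up to the single, possibly unfinished, final round). Since each completed round contributes exactly $K$ rejections, an $O(\sqrt{N_1\log\frac{K}{\delta}})$ bound on the round count immediately yields the claimed $O(K\sqrt{N_1\log\frac{K}{\delta}})$ bound on rejections. Thus the whole lemma reduces to controlling how often the best arm is rejected.

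The second step is a concentration bound on each such rejection. Arm $1$ is rejected at a given step only if $lcb_{j}\ge ucb_{1}$ for some $j\neq 1$. On the ``clean'' event that every arm's true mean lies inside its confidence interval this is impossible, since then $lcb_{j}\le\mu_j\le\mu_2<\mu_1\le ucb_{1}$. Hence every rejection of arm $1$ is \emph{witnessed} by a one-sided confidence-interval violation, either $ucb_{1}<\mu_1$ or $lcb_{j}>\mu_j$. Using the sub-Gaussian tail of the empirical mean together with the explicit choice $\Gamma(t;\rho,\alpha)=2\alpha\log[\rho_t n(t)]$, a single violation at a step with total acceptance count $n(t)=N$ has probability at most $(\rho_t N)^{-\alpha}\le(\rho_0 N)^{-\alpha}$, because $\exp(-\Gamma/2)=(\rho_t n(t))^{-\alpha}$. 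A union bound over the at most $K$ arms gives a per-step rejection probability at most $K(\rho_0 N)^{-\alpha}$. Incorporating the gap $\Delta_1$ (the deviation actually required exceeds the half-width by at least $\Delta_1$) sharpens this by an additional factor decaying exponentially in the relevant acceptance count, which I expect to be essential for small $\alpha$.

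The third step aggregates these estimates over the run. Each round contributes at least one acceptance — the arm with the currently largest lower-confidence bound cannot be rejected and is therefore accepted when recommended — so there are at most $N_1$ rounds and the best-arm rejections are charged to increasing total-acceptance counts in $\{1,\dots,N_1\}$. Summing the per-step bounds over these counts gives an expected number of best-arm rejections of order $\sum_{N=1}^{N_1}K(\rho_0 N)^{-\alpha}$. To pass from expectation to the required high-probability statement I would apply a martingale tail inequality (Freedman/Bernstein) to the sequence of rejection indicators, whose predictable variation is controlled by the same sum; balancing the deviation term against this variation, together with the $N_1$ cap, is what produces the $\sqrt{N_1}$ factor and the $\log\frac{K}{\delta}$ confidence term.

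The main obstacle is this last step, for two reasons. First, the rejection events are history-dependent and the rounds stop at data-dependent (stopping) times, so the per-step probabilities must be treated conditionally on the filtration rather than as independent coins, and a supermartingale/optional-stopping argument is needed to make the summation and the charging to distinct counts rigorous. Second, and more delicately, the \emph{square-root} dependence on $N_1$ is not automatic from the crude bound $K(\rho_0 N)^{-\alpha}$ alone: for $\alpha\ge\frac{1}{2}$ the sum $\sum_{N\le N_1}N^{-\alpha}$ is already $O(\sqrt{N_1})$, but for small $\alpha$ one must invoke the sharper, gap-aware tail to prevent the count of violations from growing like $N_1^{1-\alpha}$. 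Handling the regime where arm $1$'s estimate is still loose (small acceptance counts), where the elementary tail bound is vacuous, is the technically hard part and is where I would concentrate the effort.
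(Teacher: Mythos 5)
Your reduction (number of completed rounds $=$ number of rejections of arm $1$, with $K$ rejections per round) is fine, but the core of your argument --- bounding rejections of the best arm by per-step confidence-interval-violation probabilities and then summing --- has a gap that you flag but do not resolve, and it cannot be resolved in the form you propose. First, the conditional probability of rejecting arm $1$ given the history is \emph{not} of order $K(\rho_0 N)^{-\alpha}$: a rejection produces no reward observation, so $\hat{\mu}_{1,t}$ and $n_1^t$ do not change when arm $1$ is rejected, and a single unlucky draw of $\hat{\mu}_{1}$ can keep arm $1$ being rejected round after round. The rejection events are therefore strongly correlated; on the bad event the conditional rejection probability is essentially $1$, so the predictable variation in your Freedman/Bernstein step is not controlled by $\sum_{N} K(\rho_0 N)^{-\alpha}$, and the supermartingale route you sketch does not go through. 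Second, even granting independence heuristically, $\sum_{N\le N_1} K(\rho_0 N)^{-\alpha} \sim K N_1^{1-\alpha}$, which exceeds the target $O\big(\sqrt{N_1\log\tfrac{K}{\delta}}\big)$ whenever $\alpha<\tfrac{1}{2}$; your proposed rescue via ``gap-aware'' tails cannot work uniformly, because the bound in the lemma has no dependence on $\Delta_1$ (and must hold as $\Delta_1\to 0$), while the extra deviation the gap buys you is exactly $\Delta_1$.

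The paper's proof uses a different, and crucially \emph{deterministic}, mechanism that your proposal misses: Lemma \ref{lm:Er}. Within one round every arm --- in particular the arm currently holding the largest empirical mean --- must be rejected, and a rejection of arm $i$ by arm $j$ forces $\hat{\mu}_{i}\le \hat{\mu}_{j} - 2\sqrt{\underline{\Gamma}^{(r)}/n(t_e^{(r)})}$; chasing these comparisons through the round shows that the potential $f(t)=\max_i \hat{\mu}_{i,t}$ drops by at least $2\sqrt{\underline{\Gamma}^{(r)}/n(t_e^{(r)})}\ge 2/\sqrt{N_1}$ per round (after a short burn-in ensuring $\Gamma\ge 1$), with no probabilistic reasoning and no reference to whether the confidence intervals contain the true means. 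The only probabilistic ingredient is then Lemma \ref{lm:iteratedlog}: with probability $1-\delta$ every $\hat{\mu}_{i,t}$ stays within $\sqrt{2\log\tfrac{2K}{\delta}}$ of $\mu_i$ for all $t$, so the total drop of $f$ is at most $2\sqrt{2\log\tfrac{2K}{\delta}}$, forcing the number of rounds $M$ to satisfy $2M/\sqrt{N_1}\le 2\sqrt{2\log\tfrac{2K}{\delta}}$, i.e.\ $M\le\sqrt{2N_1\log\tfrac{2K}{\delta}}$. This argument is uniform in $\alpha$ and in the gaps, which is precisely where your approach breaks down; the missing idea is this per-round potential decrease, not any sharpening of the tail bounds.
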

In particular, if we choose $N_1 \sim O(K^{\frac{1}{\alpha}}\delta^{-\frac{1}{\alpha}})$ in accordance with Lemma \ref{lm:phase2bound1}, the total number of rejections in Phase-1 can be upper bounded by $O(K^{1+\frac{1}{2\alpha}}\delta^{-\frac{1}{2\alpha}}\sqrt{\log \frac{K}{\delta}})=o(N_1)$ as $\delta \xrightarrow{} 0$. In the next section, we will show that $N_1 \sim O(K^{\frac{1}{\alpha}}\delta^{-\frac{1}{\alpha}})$ is necessary to guarantee the success in Phase-2. The proof of Lemma \ref{lm:algo4} depends on the following two lemmas.

\begin{lemma}\cite{lattimore2020bandit}\label{lm:iteratedlog}
Let $\{X_t\}_{t=1}^{\infty}$ be a sequence of $i.i.d.$ sub-Gaussian random variables with zero mean and unit variance, and $\hat{\mu}_n=\frac{1}{n}\sum_{t=1}^n X_t$, for any $\delta\in (0,1)$, $$\mathbb{P}\Big(\forall n\in \mathbb{N}^+: |\hat{\mu}_n| \leq \sqrt{\frac{2}{n}\log \frac{n(n+1)}{\delta}} \Big) > 1-\delta.$$
\end{lemma}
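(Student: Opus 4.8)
The plan is to establish this time-uniform (anytime) concentration bound by combining a Chernoff-type tail estimate for each fixed $n$ with a union bound over all $n \in \mathbb{N}^+$, where the confidence radius has been deliberately engineered so that the per-index failure probabilities form a convergent series. The whole content of the lemma is that, unlike a fixed-$n$ Hoeffding interval, the width here grows logarithmically in $n$, and the precise $n(n+1)$ normalization inside the logarithm is chosen exactly so that the union bound telescopes.

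First I would fix $n$ and note that $\hat{\mu}_n = \frac{1}{n}\sum_{t=1}^n X_t$ is an average of $n$ independent zero-mean, $1$-sub-Gaussian variables (the stated unit-variance normalization), so $\hat{\mu}_n$ is itself $\frac{1}{n}$-sub-Gaussian since variance proxies add under convolution and scale by $1/n^2$ under averaging. The standard sub-Gaussian tail bound then gives, for every $\epsilon>0$,
$$\mathbb{P}\big(|\hat{\mu}_n| \ge \epsilon\big) \le 2\exp\Big(-\tfrac{n\epsilon^2}{2}\Big).$$
Next I would substitute the specific radius $\epsilon_n = \sqrt{\tfrac{2}{n}\log\tfrac{n(n+1)}{\delta}}$ from the statement, which is chosen precisely so that $\tfrac{n\epsilon_n^2}{2} = \log\tfrac{n(n+1)}{\delta}$; the per-index failure probability therefore collapses to $\mathbb{P}(|\hat{\mu}_n| \ge \epsilon_n) \le \tfrac{2\delta}{n(n+1)}$.

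Finally I would union-bound over all $n$ to control the complementary (bad) event, using the telescoping identity $\sum_{n=1}^{\infty}\tfrac{1}{n(n+1)} = \sum_{n=1}^{\infty}\big(\tfrac{1}{n}-\tfrac{1}{n+1}\big) = 1$, which bounds the total failure probability by $\sum_{n=1}^{\infty}\mathbb{P}(|\hat{\mu}_n|\ge\epsilon_n) \le 2\delta$ and hence yields $\mathbb{P}(\forall n: |\hat{\mu}_n| \le \epsilon_n) \ge 1-2\delta$. To land exactly on the stated $1-\delta$, I would either run the tail estimate one-sidedly (removing the factor of $2$) or absorb the constant by rescaling $\delta$ inside the logarithm. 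The main obstacle — indeed the only genuine idea — is handling the uniformity over infinitely many indices: a naive union bound over a fixed-width interval would diverge, so everything rests on letting the radius grow like $\sqrt{\log(n(n+1)/\delta)/n}$ with the $n(n+1)$ factor tuned so that $\sum_n \mathbb{P}(\text{failure at }n)$ telescopes to a constant. The sub-Gaussian tail and the variance scaling of the sample mean are routine, and the factor-of-two bookkeeping between the two-sided bound and the claimed $1-\delta$ is a purely cosmetic matter.
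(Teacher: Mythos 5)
The paper never proves this lemma at all: it is imported as a citation to \cite{lattimore2020bandit}, so the only meaningful comparison is with the standard proof of such anytime bounds, which is exactly the route you take — a per-$n$ Chernoff tail for the sample mean (variance proxy $1/n$), a radius tuned so the failure probability at index $n$ is proportional to $\frac{1}{n(n+1)}$, and the telescoping union bound $\sum_{n\geq 1}\frac{1}{n(n+1)}=1$. Those three steps are all executed correctly, and this is surely the intended argument behind the cited result.

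Where you are too quick is the factor of $2$, which you dismiss as cosmetic; in fact neither of your proposed repairs yields the statement as written. Running the Chernoff bound one-sidedly controls only $\mathbb{P}(\hat{\mu}_n \geq \epsilon_n)$, so the union bound would conclude $\mathbb{P}(\forall n:\ \hat{\mu}_n \leq \epsilon_n) \geq 1-\delta$ — a claim without the absolute value, which is not the lemma (and the paper genuinely needs two-sided control when it invokes this result, e.g.\ in the proofs of Lemma \ref{lm:phase2bound2}, Lemma \ref{lm:algo4}, and Theorem \ref{thm:lowerbound4delta}). Rescaling $\delta$ inside the logarithm instead proves the bound for the strictly larger radius $\sqrt{\frac{2}{n}\log\frac{2n(n+1)}{\delta}}$, which is again a different (weaker) statement. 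So what your argument literally establishes is either confidence $1-2\delta$ at the stated width, or confidence $1-\delta$ at a slightly wider interval; the lemma as stated sits strictly in between, and with only sub-Gaussianity as the hypothesis the per-$n$ two-sided tail bound $2e^{-n\epsilon_n^2/2}$ cannot in general be improved, so closing that residual factor of $2$ would require a genuinely different device (e.g.\ Gaussian-specific tail estimates or a maximal/peeling inequality) rather than bookkeeping. For this paper the discrepancy is immaterial — every invocation of the lemma carries $\delta/K$-type slack and constants inside logarithms are absorbed into $O(\cdot)$ — but an honest write-up should state which of the two weaker versions it has actually proved rather than asserting the stated one.
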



\begin{lemma}\label{lm:Er}
Let $f(t) = \max_{i=1}^K \hat{\mu}_{i,t}$ be the highest empirical mean maintained by the user at time step $t$. Then for any round $r$ denoted by $[t_s^{(r)}, t_e^{(r)}]$ in Algorithm \ref{alg:phase1_alg}, we have $$f(t_e^{(r)}) \leq f(t_s^{(r)}) - 2\sqrt{\frac{\underline{\Gamma}^{(r)}}{n(t_e^{(r)})}},$$
where $\underline{\Gamma}^{(r)}=\min_{t\in[t_s^{(r)}, t_e^{(r)}]}\Gamma(t).$
\end{lemma}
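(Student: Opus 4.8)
The plan is to prove the inequality \emph{deterministically}: it follows purely from the rejection rule and the bookkeeping of Algorithm~\ref{alg:phase1_alg}, with no appeal to any concentration event. First I would let $m=\argmax_{i\in[K]}\hat{\mu}_{i,t_e^{(r)}}$ be an arm attaining the end-of-round maximum, so that $f(t_e^{(r)})=\hat{\mu}_{m,t_e^{(r)}}$. Since $F$ is reset to $[K]$ at the start of round $r$, arm $m$ is recommended and rejected exactly once during the round, say at time $\tau\in[t_s^{(r)},t_e^{(r)}]$; once rejected it is removed from $F$ and never recommended again before $t_e^{(r)}$, so its empirical mean is frozen and $\hat{\mu}_{m,\tau}=\hat{\mu}_{m,t_e^{(r)}}=f(t_e^{(r)})$.

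Next I would extract the quantitative gap from the decision rule. Because $m$ is rejected at $\tau$, there is an arm $k\neq m$ with $lcb_{k,\tau}\geq ucb_{m,\tau}$, i.e. $\hat{\mu}_{k,\tau}-\sqrt{\Gamma(\tau)/n_k^\tau}\geq\hat{\mu}_{m,\tau}+\sqrt{\Gamma(\tau)/n_m^\tau}$. Rearranging and using $n_k^\tau,n_m^\tau\leq n(\tau)\leq n(t_e^{(r)})$ together with $\Gamma(\tau)\geq\underline{\Gamma}^{(r)}$ yields $\hat{\mu}_{k,\tau}\geq f(t_e^{(r)})+2\sqrt{\underline{\Gamma}^{(r)}/n(t_e^{(r)})}$. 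It therefore suffices to show $\hat{\mu}_{k,\tau}\leq f(t_s^{(r)})$, since then $f(t_s^{(r)})\geq\hat{\mu}_{k,\tau}\geq f(t_e^{(r)})+2\sqrt{\underline{\Gamma}^{(r)}/n(t_e^{(r)})}$, which is exactly the claim.

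The crux is the bound $\hat{\mu}_{k,\tau}\leq f(t_s^{(r)})$, and the key observation I would use is that the rejecting arm $k$ must be recommended \emph{after} $m$ within round $r$. Indeed, the displayed lower bound gives $\hat{\mu}_{k,\tau}>f(t_e^{(r)})\geq\hat{\mu}_{k,t_e^{(r)}}$ (strictly, whenever $\underline{\Gamma}^{(r)}>0$), so $k$'s empirical mean strictly decreases between $\tau$ and $t_e^{(r)}$; since empirical means change only upon an acceptance, $k$ must be accepted at some time in $(\tau,t_e^{(r)}]$. Were $k$ handled before $m$, it would already be rejected and frozen by time $\tau$, contradicting this decrease; hence $k$ is handled strictly after $m$, and in particular $k$ receives no acceptance on $[t_s^{(r)},\tau]$. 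Consequently $\hat{\mu}_{k,\tau}=\hat{\mu}_{k,t_s^{(r)}}\leq\max_i\hat{\mu}_{i,t_s^{(r)}}=f(t_s^{(r)})$, closing the chain.

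I expect this crux step to be the main obstacle, precisely because the naive route---arguing that $f$ is non-increasing throughout the round---is false: an unbounded sub-Gaussian reward can momentarily raise some arm's empirical mean, so $f$ need not decrease pointwise. The argument above sidesteps this by comparing only the two frozen quantities $\hat{\mu}_{m,\tau}=f(t_e^{(r)})$ and $\hat{\mu}_{k,\tau}=\hat{\mu}_{k,t_s^{(r)}}$, exploiting the ordering of the recommendation windows rather than any monotonicity of $f$. The one remaining piece of care is the degenerate regime $\underline{\Gamma}^{(r)}=0$ (where the separation is only non-strict): this occurs merely in the initial transient when $\rho_t n(t)\leq 1$, makes the claimed drop vacuous, and so does not affect the round-counting application in Lemma~\ref{lm:algo4}.
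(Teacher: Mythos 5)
Your main argument is correct for the regime $\underline{\Gamma}^{(r)}>0$ and takes a genuinely different route from the paper's. The paper fixes the maximizer $i$ at the \emph{start} of the round and runs a forward induction over the rejection order: for every arm $j$, at its rejection time the rejecting arm $j'$ is either already frozen below $f(t_s^{(r)})-2\sqrt{\underline{\Gamma}^{(r)}/n(t_e^{(r)})}$ (by the inductive claim) or not yet touched in this round (hence at most $f(t_s^{(r)})$); so every arm's frozen mean ends the round below the threshold, and taking the maximum gives the lemma. You instead work backwards from the \emph{end-of-round} maximizer $m$ and its rejecting arm $k$, replacing the induction by a single structural observation: the separation $\hat{\mu}_{k,\tau}\geq f(t_e^{(r)})+2\sqrt{\underline{\Gamma}^{(r)}/n(t_e^{(r)})}$ forces $k$ to be accepted after $\tau$, which (since processing windows within a round are disjoint and rejection freezes an arm for the rest of the round) forces $k$'s window to come after $m$'s, whence $\hat{\mu}_{k,\tau}=\hat{\mu}_{k,t_s^{(r)}}\leq f(t_s^{(r)})$. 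This is shorter, isolates exactly where the algorithm's structure enters, and your warning that $f$ need not be monotone within a round is accurate; note, though, that the paper's proof is also purely deterministic, so that is not a point of difference.

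The one genuine gap is your treatment of the degenerate case $\underline{\Gamma}^{(r)}=0$. There the lemma still asserts $f(t_e^{(r)})\leq f(t_s^{(r)})$, which is \emph{not} vacuous --- it is a substantive deterministic claim about the algorithm --- and your argument genuinely cannot deliver it: without strict separation you cannot conclude that $k$ was accepted after $\tau$, hence cannot rule out that $k$ was processed (and frozen at a value exceeding $f(t_s^{(r)})$) before $m$. The paper's induction covers this case uniformly, because arms rejected earlier are handled by the inductive hypothesis rather than by an ordering contradiction. You are right that the case is irrelevant downstream --- Lemma~\ref{lm:algo4} starts indexing rounds late enough that $\Gamma(t)\geq 1$ throughout --- but as a proof of the lemma as stated it is left open. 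Either restrict the statement to rounds with $\underline{\Gamma}^{(r)}>0$, or close the case by following the chain of rejecting arms backwards in time (each link strictly earlier, so the chain terminates at an arm untouched in the round), which is in essence the paper's inductive argument.
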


Lemma \ref{lm:Er} shows an interesting property about the user's empirical reward estimation during our Algorithm \ref{alg:phase1_alg} --- the maximum empirical mean will decrease by at least $ 2\sqrt{\frac{\underline{\Gamma}^{(r)}}{n(t_e^{(r)})}}$ after each round. This implies that Phase 1 cannot run for too many rounds. Finally, assembling   Lemma \ref{lm:phase2bound1}, \ref{lm:phase2bound2}, and \ref{lm:algo4}, we can derive the upper bound for the stopping time of BAIR in Theorem \ref{thm:upperbound2}.


\subsection{The Lower Bound}\label{sec:lower}

It is worthwhile to compare our upper bound on the number of recommendations in Theorem \ref{thm:upperbound2} with the $O(\log \frac{1}{\delta})$ upper bound in standard BAI setting. Specifically, 
our bound is worse due to the leading term $O(\delta^{-\frac{1}{\alpha}})$. In this subsection, we   prove that this performance deterioration is inevitable due to our focus on an intrinsically harder setup with only the user's revealed preferences. As our second main result, the following theorem shows that the dependence of $\delta$ in the upper bound of Theorem \ref{thm:upperbound2} is tight.

\begin{theorem}\label{thm:lowerbound4delta}
For any algorithm $\pi$ and $0<c<\frac{1}{2}$, there exists a problem instance depending on $\delta$ such that if $\pi$ collects less than $N_0=\max\{\frac{\delta^{-\frac{1}{\alpha}+c}}{\rho_0}, \frac{2}{\Delta_1^2}\log\frac{1}{4\delta} \}$ accepted recommendations, it must make mistake about the best arm with probability at least $\delta$. 
\end{theorem}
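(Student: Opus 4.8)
The plan is to prove the lower bound by a change-of-measure (two-point) argument: construct a pair of instances that a correct algorithm must distinguish, then bound the statistical information that the coarse accept/reject feedback can carry. Since $N_0$ is a maximum of two quantities, I would establish each as a separate necessary budget on the number of acceptances and then take the larger; for a single fixed instance both arguments apply, so proving each bound on acceptances suffices. A structural observation I would use throughout is that the system's observation (accept/reject) is a deterministic function of the user's internal state, which in turn depends only on the hidden realized-reward sequence; hence by the data-processing inequality, any indistinguishability established at the level of realized rewards transfers to the system's transcript.

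The second term, $\frac{2}{\Delta_1^2}\log\frac{1}{4\delta}$, is the classical fixed-confidence BAI barrier. I would take $\nu$ with $\mu_1>\mu_2\ge\cdots$ and gap $\Delta_1$, and an alternative $\nu'$ obtained by raising $\mu_2$ past $\mu_1$ (so the best arm changes) while keeping all other arms fixed. Granting an imaginary oracle that reveals every realized reward of an accepted arm only makes the problem easier, and for that oracle the standard transportation inequality of \cite{garivier2016optimal} forces $\mathbb{E}[n_1+n_2]$ to be $\Omega(\frac{1}{\Delta_1^2}\log\frac{1}{\delta})$ whenever the error is below $\delta$ on both instances. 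Because our system sees strictly less than this oracle, the same lower bound holds on the number of acceptances it must collect, yielding the second term.

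The first term, $\frac{\delta^{-1/\alpha+c}}{\rho_0}$, is the genuinely new part and the crux. Its source is that the user's confidence width $\sqrt{\Gamma(t)/n_i^t}$, with $\Gamma(t)=2\alpha\log[\rho_t n(t)]$, shrinks only logarithmically, so the tail of a reward deviation on the scale of this width is polynomial rather than exponential in the number of acceptances: choosing the user type $\rho_t\equiv\rho_0$, a deviation $|\hat\mu_{i}-\mu_i|\ge\sqrt{\Gamma/n_i^t}$ occurs with probability $\Theta\big((\rho_0 n(t))^{-\alpha}\big)$. I would again use two instances that swap the identity of the best arm among two nearly-tied arms, and note that their accept/reject transcripts remain indistinguishable until the first step at which a rejection separates these two arms; such a separating rejection requires the relevant interval endpoints to cross, i.e. a deviation on the scale of the confidence width. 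Setting $n(t)=N_0=\rho_0^{-1}\delta^{-1/\alpha+c}$ makes this crossing probability $\Theta(\delta^{1-c\alpha})$, which for $0<c<\frac{1}{2}$ exceeds $\delta$ by the factor $\delta^{-c\alpha}$; the purpose of the slack $c$ is precisely to keep the failure probability above $\delta$ once the deviation event is combined across the relevant steps. Concluding that the two instances stay indistinguishable with probability at least $2\delta$, any algorithm errs with probability at least $\delta$ on one of them.

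The main obstacle is this last linkage: turning ``a confidence-width deviation is polynomially likely'' into ``the entire transcript is statistically indistinguishable.'' Making this rigorous requires a careful coupling of the two reward streams so that the induced accept/reject decisions agree except on a controlled event, together with a correct accounting of how the per-step crossing probabilities accumulate over the $O(N_0)$ acceptances without either washing out indistinguishability or collapsing below $\delta$ — which is exactly where the exponent slack $c$ and the restriction $c<\frac{1}{2}$ are spent. I would also verify that the construction respects the model constraints (a strict ordering $\mu_1>\mu_2\ge\cdots$, sub-Gaussian unit-variance rewards, and $\rho_t\in[\rho_0,\rho_1]$), and that the small gap used for the first term is compatible with the $\Delta_1$ appearing in the second term, so that one instance realizes the maximum.
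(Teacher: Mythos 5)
Your two-part decomposition and your handling of the classical term are consistent with the paper: the bound $\frac{2}{\Delta_1^2}\log\frac{1}{4\delta}$ is obtained there by exactly the reduction you describe (an agent with access to realized rewards can only be faster, so the Garivier--Kaufmann lower bound transfers). You have also correctly isolated the quantitative heart of the new term: deviations at the one-sample confidence-width scale $\sqrt{2\alpha\log(\rho n)}$ have \emph{polynomial} probability $\approx(\rho n)^{-\alpha}$, and this exceeds $\Theta(\delta)$ precisely when $n<\delta^{-1/\alpha+c}/\rho_0$. However, the construction you propose for that term — two \emph{nearly-tied} arms whose transcripts stay indistinguishable "until a separating rejection" — is the wrong mechanism, and the step you yourself flag as the main obstacle is a genuine, unresolved gap rather than a technicality. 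In your picture, indistinguishability is the \emph{typical} event and the polynomial-probability crossing is what destroys it; to close the argument you must show the crossing probability, accumulated over the horizon, stays bounded away from $1$. A union bound gives $N_0\cdot(\rho_0N_0)^{-\alpha}$, which is $\Theta(1)$ at $\alpha=1$ and diverges as $\delta\to0$ for $\alpha<1$, so the argument collapses exactly where it matters. Your writeup is also internally inconsistent about the direction of the key event: you first say the crossing \emph{breaks} indistinguishability, but then argue that the crossing probability exceeding $\delta$ is what "keeps the failure probability above $\delta$" — the latter only makes sense if the deviation is what \emph{creates} the confusion. Finally, near-tied arms force $\Delta_1$ to be tiny, so the $\frac{2}{\Delta_1^2}\log\frac{1}{4\delta}$ term dominates the maximum defining $N_0$ and an analysis at $n=\delta^{-1/\alpha+c}$ no longer addresses the stated claim; with a constant gap instead, the user separates the two arms after $O(\log\frac{1}{\delta})$ acceptances and indistinguishability fails long before $\delta^{-1/\alpha}$ acceptances.

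The paper's proof avoids the accumulation problem entirely by making the confusing event \emph{one-shot} rather than per-step. It takes $\nu=(1+\epsilon-d,\,1,\,-\frac{1}{\delta},\dots,-\frac{1}{\delta})$ and $\nu'=(1+\epsilon,\,1,\,-\frac{1}{\delta},\dots,-\frac{1}{\delta})$, which differ only in arm 1 and by a \emph{large} amount $d\approx\sqrt{2\alpha\log\rho_1N_0}$ — the full width of a one-sample confidence interval at horizon $N_0$. The confusing event is that arm 1's single accepted sample falls below $1+\epsilon-d$; by the Gaussian lower tail its probability is of order $e^{-d^2/2}/d\approx(\rho_1N_0)^{-\alpha}/d$, which exceeds $2^K\delta$ precisely when $N_0<\delta^{-1/\alpha+c}/\rho_0$ (this is where the slack $c$ is actually spent, together with a constant-probability regularity event on the remaining arms). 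Conditioned on this event, a monotonicity argument (comparing $f(n;\rho_t,\alpha)$ to $f(N_0;\rho_t,\alpha)$, not a union bound over steps) shows arm 1's UCB stays below arm 2's LCB for \emph{every} $n\le N_0$, so in both instances every arm except arm 2 is accepted exactly once and rejected forever after. The binary transcript is then a deterministic, identical function of the recommendation sequence under both instances, so indistinguishability holds outright on an event of probability at least $2\delta$ — no coupling of reward streams and no accounting of per-step crossing probabilities is needed. This one-shot masking construction is the missing idea in your proposal.
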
 
\begin{proof}[Proof Sketch]
The lower bound $N_0 \geq \frac{2}{\Delta_1^2}\log\frac{1}{4\delta}$ is  from the general lower bound result for BAI in a stochastic bandit setting, as the system in our setting can never find the best arm quicker than an BAI algorithm that has access to the realized rewards. To prove $N_0\geq \delta^{-\frac{1}{\alpha}+c}/\rho_0$, we  construct  two problem instances $\nu$ and $\nu'$ such that: 1). they have different best arms; 2). any system interacting with $\nu$ or $\nu'$ will receive exactly the same sequences of user binary responses  with probability at least $2\delta$, as long as it collects less than $N_0$ acceptances. Therefore, the system is not able to differentiate between $\nu$ and $\nu'$ with probability at least $1-\delta$, thus making mistakes about the best arm with probability $\delta$ on either $\nu$ or $\nu'$. Our final proof is based on the ensemble of the difficult instances in both situations.  We defer the detailed construction and proof to the appendix.
\end{proof}
Note that the lower bound $ \delta^{-\frac{1}{\alpha}+c}/\rho_0$ illustrates the intrinsic hardness of BAI from revealed preferences:  any algorithm has to make at least $\Omega(\delta^{-\frac{1}{\alpha}+c})$ recommendations in order to guarantee the identification of the best arm for any problem instances in our setup. This is in sharp contrast to the well-known $O(\log \frac{1}{\delta})$ lower bound in the standard bandit reward feedback  setting. 
\section{Experiments}
In this section, we empirically study BAIR to support our theoretical analysis. We use simulations on synthetic datasets in comparison with several baseline algorithms. Since we propose a new perspective to model user-system interactions in RS, there is no baseline for direct comparison. However, this also gives us an opportunity to demonstrate how problematic it may be when using a wrong user model for the observed system-user interactions. 

\begin{table*}[t]
\centering
\begin{small}
\caption{Comparison between BAIR and three baselines on proposed metrics. $(\alpha=1)$}
\label{tb:exper-baselines}
\begin{tabular}{ll|cc|cccc|cccc}
\toprule
 &   & \multicolumn{2}{c}{Stopping Time} & \multicolumn{4}{c}{Rejection Rate (\%)} & \multicolumn{4}{c}{Prob. of Success} \\
$\delta$ & $K$ & BAIR & T\&S & BAIR & UNI& EXP3 & T\&S & BAIR & UNI & EXP3 & T\&S \\
\midrule

\multirow{4}{*}{$0.1$} & 2    & 405  & 539 & 0.8 & 8.5  &  2.5 & 1.0 & 0.999  & 0.786  & 0.621  & 0.999 \\
& 5                           & 737  & 1069 & 1.4 & 27.6 & 8.8  & 1.7 & 1.000  & 0.568  & 0.549  & 0.971 \\
& 20                          & 2113 & 3107 & 1.9 & 33.7 & 11.6 & 4.6 & 1.000 & 0.229  & 0.408  & 0.966 \\
& 100                         & 10449 & 16523& 2.0 & 34.4 & 13.4  & 11.9& 1.000 & 0.092 & 0.400  & 0.965\\
\hline
\multirow{4}{*}{$0.05$} & 2  & 413 & 557 & 0.7 & 9.4 & 2.8 & 1.1 & 1.000 & 0.799  & 0.654  & 1.000 \\
                        &  5 & 787 & 1123 & 1.2 & 29.2 & 7.6 & 1.9 & 1.000 & 0.599 & 0.581 & 0.978 \\
                        &  20 & 2421 & 3161 & 1.5 & 40.4 & 12.3 & 4.4 & 1.000 & 0.429& 0.577 & 0.965 \\
                        &  100 & 10826 & 16577 & 2.2 & 43.2 & 14.1 & 11.1 & 1.000 & 0.150& 0.534  & 0.945 \\
\hline
\multirow{4}{*}{$0.02$} & 2  & 422 & 544 & 0.7 & 12.2 & 3.9 & 1.2 & 1.000 & 0.806 & 0.638  & 0.999 \\
                        &  5 & 879 & 1148 & 1.2 & 31.8 & 7.7 & 2.1 & 1.000 & 0.670 & 0.607  & 0.962 \\
                        &  20 & 3593 & 3210 & 1.6 & 50.2 & 11.9 & 4.4 & 1.000 & 0.436& 0.680& 0.957 \\
                        &  100 & 11528 & 16955 & 2.5 & 46.7 & 14.2 & 12.3 & 1.000 & 0.153 & 0.634 & 0.963\\
\hline
\multirow{4}{*}{$0.01$} & 2   & 437  & 554 & 0.8 & 17.6 & 2.7 & 1.3 & 1.000 & 0.821  & 0.632  & 0.998 \\
& 5                           & 940 & 1153 & 1.3 & 34.9 & 7.7  & 2.0 & 1.000 & 0.701  & 0.604  & 0.959  \\
& 20                          & 4017 & 3223& 1.4 & 51.6 & 13.6 & 3.0& 1.000 & 0.476  & 0.737  & 0.947 \\
& 100                         & 20344& 27548 & 1.5 & 52.8 & 15.9 & 12.4 & 1.000 & 0.130  & 0.725  & 0.930 \\
\hline
\multirow{4}{*}{$0.005$} & 2   & 512 & 570 & 0.8 & 27.6 & 4.0 & 1.4 & 1.000 & 0.992  & 0.901  & 1.000 \\
& 5                            & 1692 & 1580 & 0.8 & 50.0 & 8.3 & 1.6 & 1.000 & 0.942   & 0.844  & 0.993  \\
& 20                           & 7827 & 5983& 0.7 & 71.4 & 12.4 & 5.4 & 1.000 & 0.938  & 0.921  & 0.979 \\
& 100                         & 40246 & 37931 & 0.7 & 73.1 & 15.0 & 16.0 & 1.000 & 0.794  & 0.930  & 0.970 \\
\bottomrule
\end{tabular}
\end{small}
\end{table*}

\subsection{Experiment Setup and Baselines}
 
 
As we discussed in the introduction, prior works treat users as an unknown but omniscient classifier, and therefore stochastic bandits are the typical choices to learn from user feedback. Moreover, since the users' responses in our problem setup are not necessarily stochastic, adversarial bandits could be another choice. Therefore, we employ the corresponding state-of-the-art algorithms, Track-and-stop \cite{garivier2016optimal} (for BAI) and EXP3 \cite{auer2002nonstochastic} (for adversarial bandits), to compare with BAIR. Besides, we also propose a heuristic baseline, uniform exploration, to directly compete with BAIR. The details of these baselines are as follows.

    \noindent \textbf{Uniform exploration (UNI)}: The system recommends candidate arms uniformly until the number of recommendations reaches the given threshold $T$. When the algorithm terminates, it outputs the arm with the maximum number of acceptances; ties are broken arbitrarily. 
    
    \noindent \textbf{Track-and-stop (T\&S)}: 
    This is the state-of-the-art solution for BAI with fixed confidence \cite{garivier2016optimal}. The expected stopping time of T\&S  provably matches its information-theoretic lower bound $O(\log \frac{1}{\delta})$ in the stochastic bandit setting. The effectiveness of T\&S relies on the independent and stationary reward assumptions on each arm, which fail to hold in our setup as user responses are not a simple function of their received rewards. We will investigate how the theoretical optimality of the T\&S breaks down under our problem setting. 
    
    \noindent \textbf{EXP3}: 
    To the best of our knowledge, there is no BAI algorithm under an adversarial setting. As a result, we adopt EXP3 \cite{auer2002nonstochastic} for comparison. Given the number of arms $K$ and a time horizon $T>K\log K$, EXP3 is provably a no-regret learning algorithm if taking $\gamma \sim O(\sqrt{\frac{\log K}{KT}})$ and $\epsilon \sim O(\sqrt{\frac{K\log K}{T}})$. We run EXP3 with this configuration and output the arm with the maximum number of acceptances in the end.

\subsection{Simulation Environment and Metrics}
For different configurations of $(\delta, K, \Delta_1)$ for BAI, we generate 1000 independent problem instances $(\mu_i)_{i=1}^K$ by sampling each $\mu_i \in N(0,1)$ and then reset $\mu_*$ to meet the given value of $\Delta_1$. Observing that our conclusion does not vary much under different $\Delta_1$, we present the result for $\Delta_1=0.5$ in this section and leave more results in the appendix \ref{app:Delta} due to space limit. The parameters in the user model are set to $\alpha=1, \rho_t = 1+\frac{n(t)}{t} \in [1,2]$, i.e., $\rho_0=1, \rho_1=2$, and results for different choice of $\alpha$ can be found in Appendix \ref{app:additional_alpha}. We run BAIR with $N_1=\frac{2K}{\delta}$ and compare its performance with UNI, EXP3 and T\&S on the entire set of problem instances and calculate the following three metrics.

\noindent\textbf{Probability of success:} When each algorithm terminates, we examine whether the output arm is the best arm (i.e., success). The probability of success ($p$) is then given by the empirical frequency of success over all problem instances. We also calculate the value $\frac{1-p}{\delta}$ to measure if and how much the probability of success falls below the given confidence level $\delta$, which is presented right after the probability of success. 

\noindent \textbf{Rejection rate:} When each algorithm terminates at step $T$, we count the total number of rejections $\# Rej$ the system receives. The rejection ratio is given by $\frac{\# Rej}{T}$, and then averaged over all problem instances. 

\noindent \textbf{Stopping time:} It is the total number of interactions needed to terminate an algorithm. BAIR and T\&S stop by their own termination rules; UNI and EXP3 stop by the input time  $T$, since these two algorithms terminate by a preset time horizon. To make a fair comparison, we set $T$ for UNI/EXP3 as the average stopping time of BAIR under the corresponding problem instance. Hence, this metric is only set to compare BAIR and T\&S.

\subsection{Comparison Between BAIR and Baselines}

The results are reported in Table \ref{tb:exper-baselines}. Based on the comparison results for BAIR and the baselines, we have the following observations.  

\noindent\textbf{BAIR vs. T\&S.} As shown in Table \ref{tb:exper-baselines}, T\&S enjoys the best performance among three baselines on rejection rate and probability of success, but still does not work well in our problem setting. Given the confidence threshold $\delta$, T\&S fails to identify the best arm with probability $1-\delta$ for $K>2$ and $\delta<0.05$. We also find the stopping time of T\&S is worse than BAIR in most cases and fails to meet its theoretical lower bound $O(\log \frac{1}{\delta})$. This is expected: our binary user feedback cannot be simply modeled as independent and stationary rewards, which are the fundamental assumptions behind the design of T\&S. Since T\&S wrongly models user responses, it is easily misinformed by the user's potentially inaccurate feedback in the early stage. As a result, it is very likely to miss the best arm and spend most of the rest time on a wrong subset. In contrast to T\&S, BAIR is aware that the revealed preferences from the early stage are very likely to have a large variance. Therefore, it chooses to make safe recommendations at first to help the user gain more experiences (Phase-1 preparation) such that that the user will reveal more accurate feedback later on (Phase-2 elimination). This explains how BAIR achieves the goal more efficiently, even with the additional cost in Phase-1. 

\noindent\textbf{BAIR vs. UNI/EXP3.} The other two baselines, UNI and EXP3, exhibit worse performance in both the rejection ratio and the probability of success than BAIR. 
    Given the same time budget, UNI always suffers from the largest proportion of rejections because it does not take any measures to eliminate bad arms. As rejections do not update the user's empirical reward estimation, the given time budget is insufficient for UNI to differentiate the arms with similar expected rewards, thus causing a low probability of success. EXP3 enjoys a lower rejection rate than UNI, because it pulls those empirically bad arms less. The mandatory exploration in EXP3 helps correct the inaccurate early observations and gives a more competitive probability of success when $K$ gets larger. However, due to the larger variance of EXP3, if the user's estimated reward for the best arm is low at the beginning, EXP3 tends to overly focus on differentiating a group of suboptimal arms, which decreases its chance of discovering the best arm.

To summarize, the fundamental reason for the failure of these baselines lies in the insufficient system exploration when facing an explorative user. These baselines either treat the user as a black-box or assume independent and stationary user feedback, which leads to a worse empirical result in terms of both accuracy and efficiency in finding the best arm. 

The result in Table \ref{tb:exper-baselines} supports our theoretical analysis in Theorem \ref{thm:upperbound2}. When $\Delta_1=0.5$ and $\delta<0.1$, $\frac{1}{\delta}$ dominates $\frac{1}{\Delta_1^2}$ and Theorem \ref{thm:upperbound2} suggests the algorithm's stopping time grows approximately linear in $\frac{1}{\delta}$. As expected, the first column in Table \ref{tb:exper-baselines} confirmed our theory. The first column in Table \ref{tb:exper-baselines} also suggests an approximately linear dependency between BAIR's stopping time and $K$. Although it is not fully supported by our theory (the leading term in the upper bound result is $O(K^{1.5})$ when $\alpha=1$), we believe this observation is informative and could be an interesting target for future work. 

\subsection{Experiments on Different Choices of $\Delta_1$}\label{app:Delta}
$\Delta_1$ is an environment variable that determines the difficulty of each problem instance; but different values of $\Delta_1$ do not impair the strength of BAIR. The comparison in Table \ref{tb:exper-baselines} (where $\Delta_1=0.5$) is illustrated in Figure \ref{fig:subfig:a} while the comparison under $\Delta_1=0.2$ is shown in Figure \ref{fig:subfig:b}. When facing harder problem instances with a smaller $\Delta_1$, BAIR and T$\&$S both maintained similar performance in terms of our evaluation metrics. Meanwhile, BAIR also consistently outperforms T$\&$S. UNI and EXP3, however, both suffered from a clear drop in the probability of success, as a smaller $\Delta_1$ means the user needs more comparisons before finding the best arm and thus incurs extra difficulty for UNI and EXP3 to distinguish those near-optimal arms.
\begin{figure*}[h]
\centering
\subfigure[$\Delta_1=0.5, \delta\in(0.1,0.05,0.02,0.01,0.005),K\in(2,5,20,100)$]{
\label{fig:subfig:a}
\includegraphics[width=1.0\textwidth]{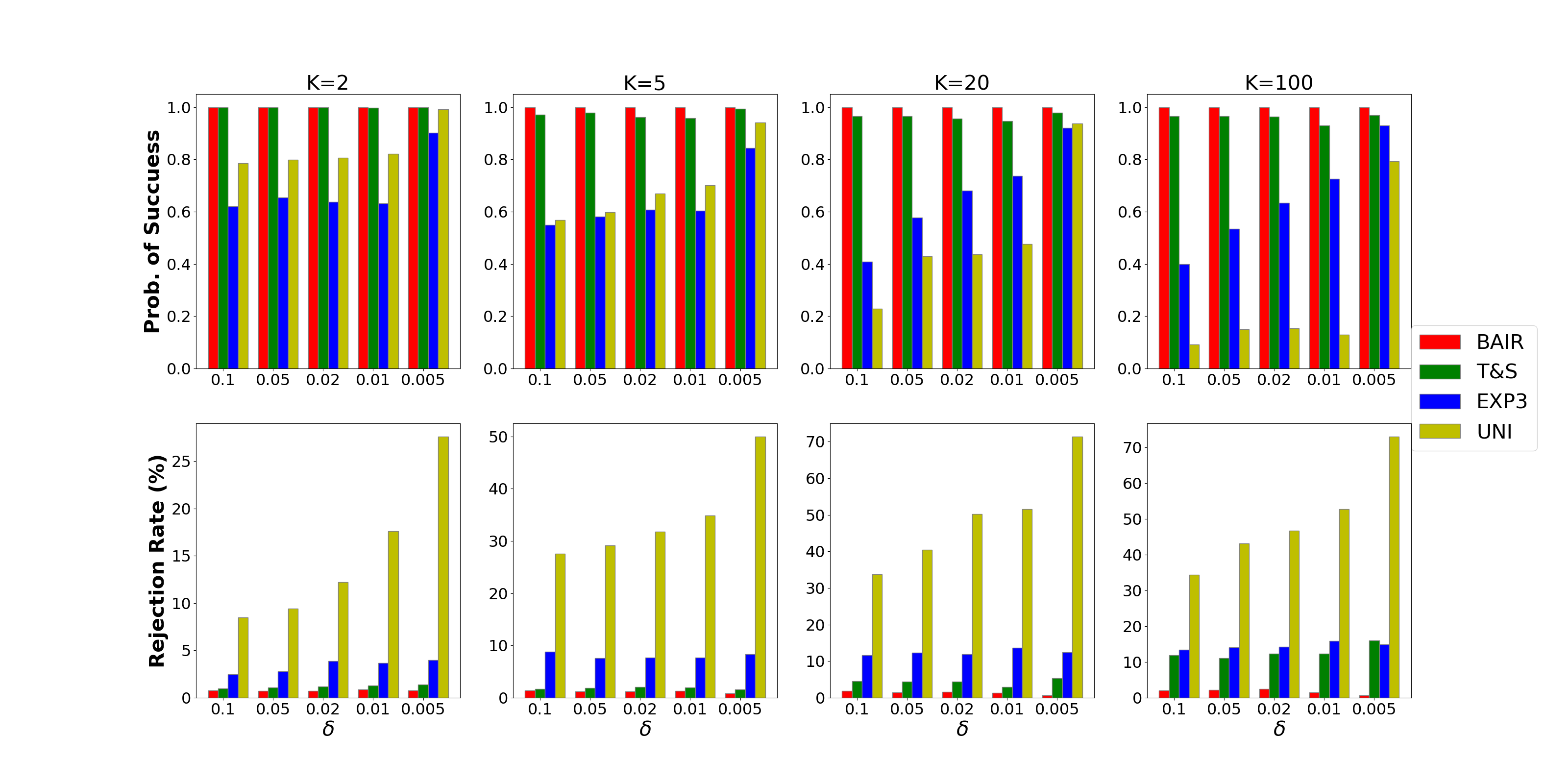}}
\subfigure[$\Delta_1=0.2, \delta\in(0.1,0.05,0.02,0.01,0.005),K\in(2,5,20,100)$]{
\label{fig:subfig:b}
\includegraphics[width=1.0\textwidth]{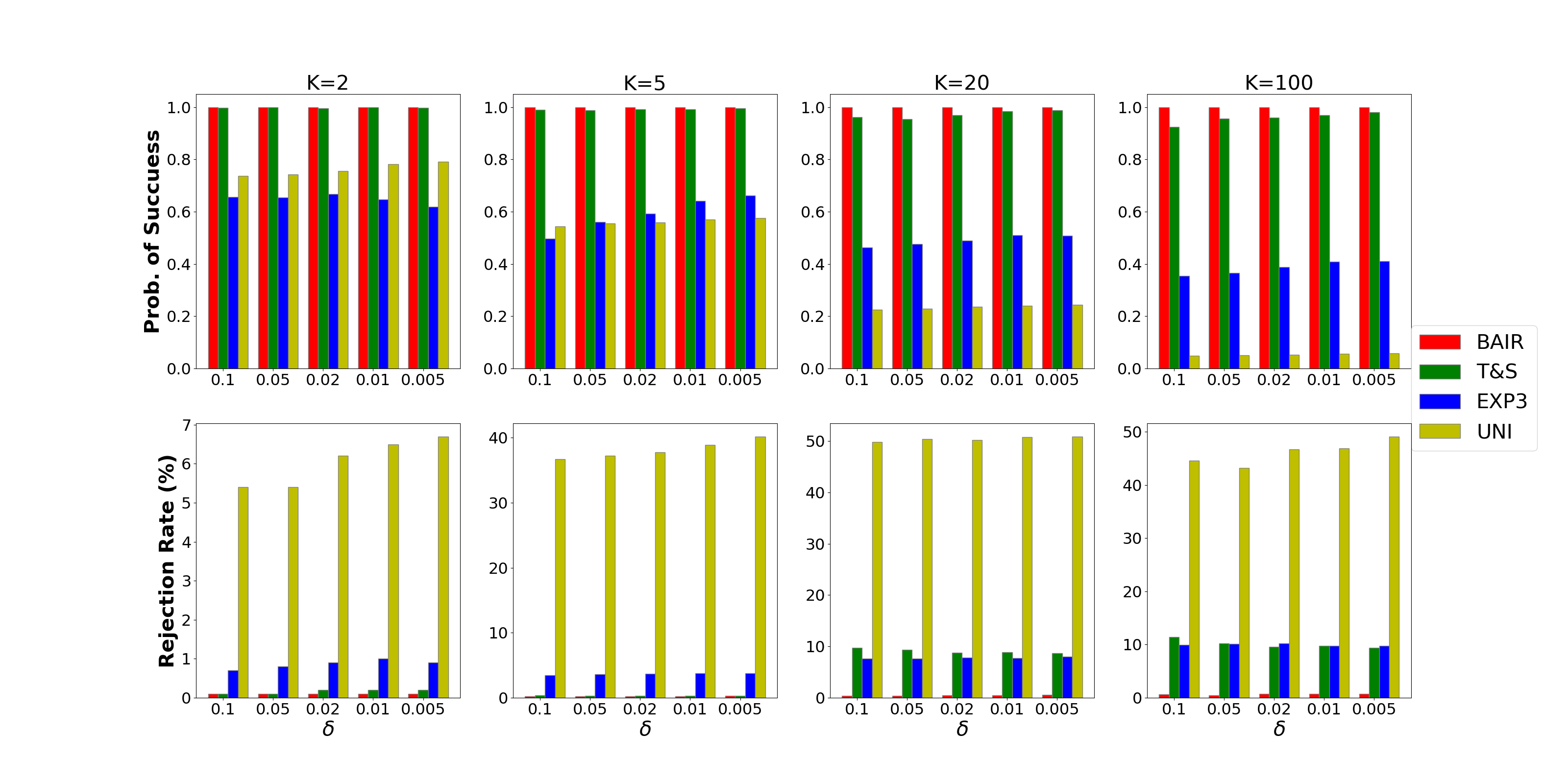}}

\caption{The comparison between BAIR/T\&S/EXP3/UNI for different $(\delta, K, \Delta_1)$. 
}
\label{fig:comparison} 
\end{figure*}

\subsection{Experiments on the Robustness of BAIR}\label{subsec:robust}
In practice, it might be too restrictive to assume that the user strictly follows our proposed confidence interval (CI) based behavior model. Thus, it is interesting and also crucial to test the robustness of BAIR under the situation where the user's behavior might deviate from the CI-based model. To this end, we extended the user model to a stochastic setting by incorporating ``decision randomness''. Specifically, we assume at each time step, with some constant probability $p$, the user makes a random decision (accept/reject the recommendation with an equal probability); otherwise, she would follow the CI-based behavior model. We demonstrate that a minor modification of BAIR still maintains competitive empirical performance in this new environment, against the three baselines. We use a natural variant of BAIR which adjusted its Phase-2 slightly: the system discards an arm after its $m$-th rejection rather than the very first rejection. We call this variant BAIR$(m)$; its adjustment is precisely to account for users' behavior noises. Note that with probability at least $1-(1-p)^m$, the first $m$-th rejection of the arm $i$ indicates $ucb_{i,t}<lcb_{j,t}$ for some $j$ and $t$ in Phase-2. By union bound, if we choose $m$ such that $1-(1-p)^m>1-\frac{\delta}{K}$ (i.e., $m>\log^{-1}(\frac{1}{1-p})\cdot\log\frac{K}{\delta}$), the Phase-2 in BAIR$(m)$ misidentifies the best arm with probability at most $\delta$, and BAIR$(m)$ finds the best arm with probability $1-2\delta$ in this new setting. Although the choice of $m$ incurs additional cost in Phase-2, it does not affect the order of our upper/lower bound result, and thus demonstrates the robustness of our solution. Table \ref{tb:exper-baselines_new} shows the comparison under the new noisy user model with $p=0.1$. The same comparison with the three baselines is reported in Table \ref{tb:exper-baselines_new}. Except for the new user model, the environment remains the same as the one used in Table \ref{tb:exper-baselines} in the main paper.

\begin{table}[t]
\centering
\begin{small}
\caption{Results under the extended user model with click noise $p=0.1$ and $m=2\log\frac{K}{\delta}$. Compared to the results in Table 2 (where $p=0$), BAIR performs slightly worse in terms of stopping time and probability of success, but still outperforms the other baselines. Its rejection rate becomes worse, because of the noisy feedback in Phase-1. However, the rejection rate is not as crucial as stopping time and probability of success for best arm identification.}
\label{tb:exper-baselines_new}
\begin{tabular}{ll|cc|cccc|cccc}
\toprule
 &   & \multicolumn{2}{c}{Stopping Time} & \multicolumn{4}{c}{Rejection Rate (\%)} & \multicolumn{4}{c}{Prob. of Success} \\
$\delta$ & $K$ & BAIR & T\&S & BAIR & UNI& EXP3 & T\&S & BAIR & UNI & EXP3 & T\&S \\
\midrule
\multirow{4}{*}{$0.1$} & 2    & 470  & 494 & 9.0 & 13.7  &  7.5 & 8.4 & 0.994  & 0.802  & 0.642 & 0.978 \\
& 5                           &  905 & 2234 & 13.9 & 33.3 & 12.6  & 7.3 & 0.973  & 0.606  & 0.587  & 0.967 \\
& 20                          & 2939 & 8832 & 19.3 & 41.2 & 17.6 & 10.2 & 0.966 & 0.292 & 0.563  & 0.959 \\
& 100                         & 13764 & 21162 & 19.8 & 39.0 & 18.0  & 20.2& 0.911 & 0.085  & 0.285  & 0.891\\
\hline

\multirow{4}{*}{$0.05$} & 2   & 506  & 516 & 10.4 & 15.1 & 7.7  & 8.2 & 1.000 & 0.832  & 0.617  & 0.997 \\
& 5                           & 959 & 2164 & 15.6 & 35.6 & 12.2 & 8.5 & 0.993 & 0.629  & 0.572 & 0.989  \\
& 20                          & 3062 & 10097 & 22.4 & 42.6 & 16.5 & 10.2 & 0.974 & 0.245 & 0.565  & 0.970 \\
& 100                         & 14393& 25321 & 26.0 & 40.4 & 18.5 & 18.3 & 0.965 & 0.064  & 0.645 & 0.930 \\

\bottomrule
\end{tabular}
\end{small}
\end{table}

\subsection{Experiments on the Comparisons with Shared Phase-1}

As we have discussed, the baseline algorithms fail due to the inadequate preparation to inform the user. One might wonder whether the baseline algorithms can be strengthened by some tailored preparation procedures, e.g., a straightforward plugin of Phase-1 algorithm. However, this idea does not work due to the following reasons. First, the output of Phase-1 (the number of rejections and acceptances on each arm) cannot be utilized by those baseline algorithms as they require the actual rewards, while the Phase-2 of BAIR is built on such simple information accumulated from Phase-1. Although it might be possible to redesign the baseline algorithms to allow them to leverage the output of Phase-1, this requires specific designs and should be considered a new algorithm, which is beyond the scope of this work. Second, if we simply run the same Phase-1 step for all algorithms without utilizing its output, there is no reason to believe such algorithms can outperform BAIR in Phase-2. This is because BAIR can confidently eliminate any arm after its very first rejection in Phase-2, while other algorithms simply do not have such confidence without the information provided by Phase-1. Consider EXP3 and T\&S: if two arms are both rejected less than once, but one has accumulated more acceptances than another, it is still insufficient to distinguish them with high confidence. However, only BAIR will utilize this fact to keep exploring these two arms. To summarize, Phase-1 and Phase-2 need to work together in BAIR to guarantee good performance; if other algorithms need to take advantage of Phase-1, we need to redesign them to incorporate the information obtained in Phase-1. Otherwise, a ‘prepared’ user after Phase-1 is still opaque to the baseline algorithms, as their response is only the relative preference rather than the actual reward. 

Our additional experiment result in Table \ref{tb:exper-extra2} also demonstrates our argument. With a shared phase-1 exploration, the UNI/EXP3 exhibits even worse success rate performance than BAIR, given the same stopping time in Phase-2. This is expected since many arms will have a very wide confidence interval at the beginning of Phase-2 and thus look equally good to the system for a long time if the output of Phase-1 is not utilized. This actually makes the user’s response even more confusing to the baseline algorithms. For T\&S, even though the stopping time has improved when it is small, the success rate suffers a significant loss due to the same reason.

\begin{table*}[t]
\centering
\begin{small}
\caption{Performance of baselines with shared Phase-1. $(\alpha=1,  N_1=\frac{2K}{\delta})$}
\label{tb:exper-extra2}
\begin{tabular}{ll|cc|cccc|cccc}
\toprule
 &   & \multicolumn{2}{c}{Phase-2 Stopping Time} & \multicolumn{4}{c}{Rejection Rate (\%)} & \multicolumn{4}{c}{Prob. of Success} \\
$\delta$ & $K$ & BAIR & T\&S & BAIR & UNI& EXP3 & T\&S & BAIR & UNI & EXP3 & T\&S \\
\midrule
\multirow{4}{*}{$0.02$} & 2 & 372 & 124 & 0.7 & 0.2 & 0.9 & 1.1 & 1.000 & 0.623 & 0.638  & 0.589 \\
                        &  5 & 479 & 326 & 1.2 & 1.2 & 1.7 & 1.8 & 1.000 & 0.460 & 0.428  & 0.492 \\
                        &  20  & 1693 & 2134 & 1.6 & 10.2 & 1.9 & 2.4 & 1.000 & 0.342& 0.280& 0.337 \\
                        &  100& 1628 & 4151 & 2.5 & 12.4 & 1.2 & 2.3 & 1.000 & 0.110 & 0.144 & 0.203\\
\hline
\multirow{4}{*}{$0.01$} & 2   & 237  & 189 & 0.8 & 0.5 & 0.7 & 1.7 & 1.000 & 0.612  & 0.620  & 0.570 \\
& 5                           & 140 & 413 & 1.3 & 1.2 & 0.7  & 2.0 & 1.000 & 0.430  & 0.304  & 0.438  \\
& 20                          & 217 & 3223& 1.4 & 13.5 & 1.6 & 2.2& 1.000 & 0.267  & 0.271  & 0.245 \\
& 100                         & 544 & 5754 & 1.5 & 14.4 & 1.9 & 2.4 & 1.000 & 0.091  & 0.125  & 0.181 \\
\bottomrule
\end{tabular}
\end{small}
\end{table*}

\subsection{Additional Experiments on Different Choices of $\alpha$}\label{app:additional_alpha}

Table \ref{tb:exper-baselines2}, \ref{tb:exper-baselines08} demonstrate the performance of BAIR against three baselines under the choices of $\alpha=2.0$ and $\alpha=0.8$. In the case $\alpha=2.0$, where the system serves an overly optimistic user, BAIR keeps outperforming the other three baselines and enjoys an even larger margin when $\frac{K}{\delta}$ becomes larger --- it is because a larger $\alpha$ reduces the total number of recommendations in Phase-1, which happens to be the dominant part in the stopping time when $\frac{K}{\delta}$ is large. On the other hand, a very explorative user with a larger $\alpha$ would generate feedback sequences that deviate further from the independent and stationary assumptions imposed in classical bandit solutions (e.g., our baselines) and lead to worse performance when those baseline methods are applied. 

When $\alpha=0.8$, BAIR's probability of success still outperforms the baselines', but its stopping time appears not as good as T$\&$S when $K$ becomes large. This is likely to be caused by our conservative choice of the upper bound of $N_1=O\Big(\Big(\frac{K}{\delta}\Big)^{\frac{1}{\alpha}}\Big)$ (to guarantee success rate), which may not be tight in $K$. Specifically, there is an $O(K^{\frac{1}{\alpha}})$ difference between the upper bound and the lower bound of $N_1$. As a result, when $\alpha$ is small, the choice of $N_1=\frac{1}{\rho_0}\Big(\frac{2K}{\delta}\Big)^{\frac{1}{\alpha}}$ might become too pessimistic. In this case, as a simple heuristic, one may adjust $N_1$ to balance the stopping time and the probability of success, e.g., by applying binary search within $(0, (2K/\delta)^{1/\alpha}/\rho_0)$ until a satisfactory probability of success is achieved on validation datasets. To demonstrate this tradeoff, we report the stopping time and the probability of success under the same setting as Table \ref{tb:exper-baselines08}, but with different choices of $N_1$ in Table \ref{tb:exper-N1}. As we can see, $N_1=K$ (i.e., only pull each arm once in Phase-1) is insufficient to guarantee $1-\delta$ probability of success, and the choice of $N_1=\frac{1}{\rho_0}\Big(\frac{2K}{\delta}\Big)^{\frac{1}{\alpha}}$ wastes too much time in Phase-1 especially for a large $K$ and small $\alpha$. However, there are choices in between that appear to be better tradeoffs as they increase the probability of success drastically at a moderate cost of stopping time. However, as we have discussed in Section \ref{sec:discussion}, the optimal choice of $N_1$ remains an open question.

\begin{table*}[t]
\centering
\begin{small}
\caption{Comparison between BAIR and three baselines on proposed metrics. $(\alpha=2.0)$}.
\label{tb:exper-baselines2}
\begin{tabular}{ll|cc|cccc|cccc}
\toprule
 &   & \multicolumn{2}{c}{Stopping Time} & \multicolumn{4}{c}{Rejection Rate (\%)} & \multicolumn{4}{c}{Prob. of Success} \\
$\delta$ & $K$ & BAIR & T\&S & BAIR & UNI& EXP3 & T\&S & BAIR & UNI & EXP3 & T\&S \\
\midrule

\multirow{4}{*}{$0.1$} & 2    &  933 & 1312 & 0.3 & 5.7  & 1.4 & 0.4 &  1.000 & 0.779  & 0.589  &  1.000\\
& 5                           &  1694 & 2934 & 0.5 &  26.0 & 4.9 & 0.4 & 1.000  & 0.455  &  0.502 & 1.000 \\
& 20                          & 4752  & 6737 & 0.8 & 27.6  & 7.7 & 1.2 & 1.000  & 0.265  &  0.402 & 1.000 \\
& 100                         &  20081 & 28430 & 0.8 & 32.0  & 7.8 & 5.2 & 1.000  &  0.025 &  0.333 & 0.988 \\
\hline
\multirow{4}{*}{$0.05$} & 2  & 956  & 1325 & 0.3 & 5.9  & 1.5 & 0.5 &  1.000  & 0.802  &  0.645 &  0.999\\
                        & 5  & 1698 & 3132 &  0.5 & 27.1 &5.2  &  0.6 &    1.000 & 0.459  & 0.551 & 0.989\\
                        & 20  & 4832 & 6799 & 0.8  & 28.0 & 8.8 &  1.3 &   1.000 &  0.278 & 0.459 & 0.962\\
                        & 100 & 22539 & 29730 & 0.9  & 29.9 & 9.0 &  4.8 & 1.000  & 0.030  & 0.354 & 0.930 \\
\hline
\multirow{4}{*}{$0.02$} & 2  & 965 & 1346 &  0.3 & 7.3 &  1.8&  0.8 & 1.000  & 0.835  & 0.761  & 1.000 \\
                        & 5  & 1709 & 3412 &  0.5 & 24.9 & 5.3 & 0.8 &  1.000  & 0.476  & 0.572  &  0.993\\
                        & 20  & 4850 & 6820 &  0.7 & 28.1 & 9.1 & 1.3 &  1.000 &  0.301 & 0.548  & 0.974 \\
                        & 100  & 23688 & 31241 & 0.9  & 30.8 & 9.2 &  4.7& 1.000  & 0.036  & 0.398 &0.954\\
\hline
\multirow{4}{*}{$0.01$} & 2   &  988 & 1379 & 0.3 & 9.8  &2.2  & 0.9 & 1.000  & 0.895  & 0.879  &  1.000\\
& 5                           & 1745  & 4171 & 0.5 & 26.0  & 5.6 &  1.0& 1.000  &  0.503 & 0.708  & 0.990\\
& 20                          &  4854 & 7012 & 0.8 &  28.1 & 9.1 & 4.3 & 1.000  & 0.332  &  0.684 &  0.952\\
& 100                         & 25091  & 34158 & 1.0 &  34.1 &  9.2& 4.0 &  1.000 & 0.047  &  0.454 &  0.931\\
\hline
\multirow{4}{*}{$0.005$} & 2   & 1023  & 1424 & 0.3 & 10.5  & 2.3 & 1.0 & 1.000  &  0.921 & 0.905  & 0.999\\
& 5                            &  1780 & 4634 & 0.5 &  26.4 & 5.6 & 1.1 & 1.000  &  0.534 &  0.855 & 0.992 \\
& 20                           &  4892 & 7086 & 0.8 & 28.3  & 8.9 &1.6  & 1.000  & 0.345  & 0.762  & 0.955 \\
& 100                         &  25537 & 39578 & 1.0 & 34.4  & 9.4 & 3.6 & 1.000  & 0.056  & 0.532  & 0.922 \\
\bottomrule
\end{tabular}
\end{small}
\end{table*}

\begin{table*}[t]
\centering
\begin{small}
\caption{Comparison between BAIR and three baselines on proposed metrics. $(\alpha=0.8)$}
\label{tb:exper-baselines08}
\begin{tabular}{ll|cc|cccc|cccc}
\toprule
 &   & \multicolumn{2}{c}{Stopping Time} & \multicolumn{4}{c}{Rejection Rate (\%)} & \multicolumn{4}{c}{Prob. of Success} \\
$\delta$ & $K$ & BAIR & T\&S & BAIR & UNI& EXP3 & T\&S & BAIR & UNI & EXP3 & T\&S \\
\midrule

\multirow{3}{*}{$0.1$} & 2    & 313  & 407 &  1.0 &  10.2  &  3.2 & 1.2  &  0.999 &  0.804 &  0.654  & 0.990  \\
& 5                           &  571 & 835 & 2.0  &  29.2  &  8.7 &  2.3 &  1.000 & 0.622  &  0.528  &  0.985 \\
& 20                          &  2070 & 2272 &  2.7 &  42.8  & 14.6  & 7.3  & 1.000  &  0.310 &  0.516  & 0.906  \\
\hline
\multirow{3}{*}{$0.05$} & 2   & 325  & 412 & 1.0  &  10.9  &  3.5 & 1.3  & 1.000  &  0.825 &  0.685  &  1.000 \\
& 5                           & 742  & 877 &  1.7 &  35.5  & 9.7  &  2.3 & 1.000  &  0.660 & 0.616   &  0.970 \\
& 20                          & 4310  & 7056 & 1.3  &  63.2  & 15.1  & 2.9  & 1.000  &  0.732 &  0.827  & 0.954  \\
\hline
\multirow{3}{*}{$0.02$} & 2   & 432  & 414 &  0.9 &  16.3  &  4.0 & 1.4  &  1.000 &  0.898 &  0.708  &  0.997 \\
& 5                           & 1883  & 1018 &  0.7 & 57.2   &  8.9 & 2.0  & 0.999  & 0.990  &  0.934  & 0.993  \\
& 20                          & 12785  & 8971 & 0.5  &  82.9  &  10.7 &  3.5 &  1.000 & 1.000  &  0.989  & 0.960  \\
\hline
\multirow{3}{*}{$0.01$} & 2   &  789 & 429 & 0.5  & 28.9  & 4.4  &  1.6 & 1.000  & 0.994  &  0.903  &  0.998 \\
& 5                           & 4321  & 1254 & 0.3  & 71.1   & 6.2  &  2.5 & 1.000  &  0.998 &  0.994  &  0.984 \\
& 20                          & 29895  & 10312 & 0.2  &   90.2 & 7.1  &  3.7 & 1.000  & 1.000  &  0.997  & 0.985  \\
\hline
\multirow{3}{*}{$0.005$} & 2  & 1821   & 436   & 0.2  &  40.4  & 3.4  &  1.8 &  1.000 & 0.999  &  0.996  &  0.998 \\
& 5                           & 10228  & 1460  & 0.1  &  76.9  &  4.2 & 2.0  & 1.000  & 1.000  &  1.000  &  0.994 \\
& 20                          & 71225  & 17934 & 0.1  &  92.9  & 4.7  &  2.6 & 1.000  &  1.000 & 1.000   &  0.992 \\
\bottomrule
\end{tabular}
\end{small}
\end{table*}

\begin{table*}[ht]
\centering
\caption{The stopping time and probability of success of BAIR under different choices of $N_1$. }
\label{tb:exper-N1}
\begin{small}
\begin{sc}
\begin{tabular}{ll|cccc|cccc}
\toprule
 &   & \multicolumn{4}{c}{Stopping Time}  & \multicolumn{4}{c}{Prob. of Success} \\
$\delta$ & $K$ & $N_1=(\frac{2K}{ \delta})^{\frac{1}{\alpha}}$ & $(\frac{\sqrt{K}}{ \delta})^{\frac{1}{\alpha}}$ & $(\frac{\log{K}}{ \delta})^{\frac{1}{\alpha}}$  & $K$ & $N_1=(\frac{2K}{ \delta})^{\frac{1}{\alpha}}$ & $(\frac{\sqrt{K}}{ \delta})^{\frac{1}{\alpha}}$  & $(\frac{\log{K}}{ \delta})^{\frac{1}{\alpha}}$ &  $K$ \\
\midrule
\multirow{2}{*}{$0.1$} & 20   & 2070 & 1673 & 1622  & 1478   & 1.000 & 1.000 & 1.000& 0.981\\
& 100                         & 13864 & 8860&8603    & 8207   & 1.000 & 1.000 & 0.999& 0.979\\
\multirow{2}{*}{$0.01$} & 20  & 29895 &  2251&1651  & 1572  & 1.000 & 1.000 & 0.999& 0.987  \\
& 100                         & 238579 & 8900&8660 & 8395 & 1.000 & 0.999 & 0.999&  0.980\\
\bottomrule
\end{tabular}
\end{sc}
\end{small}
\end{table*}
\section{Discussions and Future Work}\label{sec:discussion}

To bring user modeling to a more realistic setting in modern recommender systems, we proposed a new learning problem of best arm identification from explorative users' revealed preferences. We relax the strong assumptions that users are omniscient by modeling users' learning behavior, and study the learning problem on the system side to infer user's true preferences given only the revealed user feedback. We proved efficient system learning is still possible under this challenging setting by developing a best arm identification algorithm with complete analysis, and also disclosed the intrinsic hardness introduced by the new problem setup. Our result illustrates the inevitable cost a recommender system has to pay when it cannot directly learn from a user's realized utilities. As concluding remarks, we point out some interesting open problems in this direction:
    
\noindent\textbf{The optimal choice of $N_1$.} 
Although our lower bound result in Theorem \ref{thm:lowerbound4delta} is tight in $\delta$, it does not match the upper bound in Theorem \ref{thm:upperbound2} in terms of $K$. The mismatch comes from the choice of $N_1=(2K/\delta)^{1/\alpha}/\rho_0$, which might be overly pessimistic as Theorem \ref{thm:lowerbound4delta} only indicates a necessary condition of $N_1>\delta^{-1/\alpha} / \rho_0$. To bridge this gap, a tighter upper bound is needed to improve the choice of $N_1$. We believe this is promising because the experiment results in Table \ref{tb:exper-baselines} demonstrate that the choice of $N_1=(2K/\delta)^{1/\alpha}/\rho_0$ almost guarantees a success probability $1.0$ even when $\delta$ takes a large value, e.g., $0.1$. This implies the stopping time of BAIR could be improved by setting a smaller $N_1$. In practice, we can fine-tune $N_1$ to pin down the optimal choice. For example, one can simply apply binary search within $(0, (2K/\delta)^{1/\alpha}/\rho_0)$ with $N_1=O(\delta^{-1/\alpha})$ as a starting point.

\noindent\textbf{Beyond a single user.}
We note that our problem formulation and solution for the system and a single user also shed light on learning from a \emph{population} of users. For example, users sometimes learn or calibrate their utility from third-party services that evaluate the quality of items by aggregating users' feedback across different platforms.
As a result, users equipped with these services are inclined to exhibit an exploratory pattern and make decisions based on the comparison of confidence intervals.
We believe that our problem setting also provides a prototype to study the optimal strategy for the system under this new emerging situation.

\section{Acknowledgement}
This work is supported in part by the US National Science Foundation under grants IIS-2007492, IIS-1553568 and IIS-1838615. Haifeng Xu is supported by a Google Faculty Research Award. 

\vskip 0.2in

\bibliography{main}
\appendix
\newpage

\section{Omitted Proofs from Section \ref{subsec:ubphase2}} 
\subsection{Proof of Lemma \ref{lm:phase2bound1}}
\begin{proof}
When Algorithm \ref{alg:phase2_alg} terminates, exactly $K-1$ eliminations have happened. Suppose arm-$K$ survives and arm-$i$ ($1\leq i \leq K-1$) is eliminated at time step $t^e_i>N_1$. Denote $j_i$ as the arm with the highest lower confidence bound at $t=t^e_i$. Because $i$ is rejected at $t=t_i^e$, we have $lcb_{j_i,t_i^e}>ucb_{i,t_i^e}$. 

Define event $B_{i,t}: |\mu_{i}-\hat{\mu}_{i,t}| \leq \sqrt{\frac{\Gamma(t;\rho,\alpha)}{n_i^{t}}}, \forall i\in[K]$. Note that for any $1\leq i\leq K-1$, $B_{i,t_i^e} \cap B_{j_i,t_i^e}$ means $\mu_i$ and $\mu_{j_i}$ are both in their confidence intervals and thus the best arm cannot be wrongly eliminated at $t=t_i^e$. Therefore, from Hoeffding’s inequality and a union bound we obtain
\begin{align}\label{eq:badprob}  \notag
    & \mathbb{P}[K \text{~is the best arm}] \\ \notag \geq & \mathbb{P}(\cap_{i=1}^{K-1} [B_{i,t_i^e}\cap B_{j_i,t_i^e}]) \\ \notag
    =& 1- \mathbb{P}(\cup_{i=1}^{K-1} [B^c_{i,t_i^e}\cup B^c_{j_i,t_i^e}])\\  \notag
    \geq & 1-2\sum_{i=1}^{K-1} \exp{\left(-\frac{1}{2}n_i^{t^e_i}\cdot \frac{\Gamma(t^e_i; \rho,\alpha)}{n_i^{t^e_i}}\right)} \\  
     > & 1-\frac{2(K-1)}{(\rho_0 N_1)^{\alpha}} . 
\end{align}

Let the left-hand side of Eq \eqref{eq:badprob} be $1-\delta$, we obtain $N_1 \geq \frac{[2(K-1)]^{\frac{1}{\alpha}}}{\rho_0\delta^\frac{1}{\alpha}}.$
\end{proof}
We note that the lower bound of $N_1$ given in Lemma \ref{lm:phase2bound1} is loose in $K$ as we apply the union bound over $K$ arms. The choice of $N_1$ can be more flexible in practice, which is empirically verified in our experiments.

\subsection{Proof of Lemma \ref{lm:phase2bound2}}
\begin{proof}
From Lemma \ref{lm:iteratedlog}, we have for each arm $i$, 
$$\mathbb{P}\left(\forall n_i^t\in \mathbb{N}^+: |\mu_i-\hat{\mu}_{i,t}| \leq \sqrt{\frac{2\log \frac{Kn_i^t(n_i^t+1)}{\delta}}{n_i^t}} \right) > 1-\frac{\delta}{K}.$$
From a union bound, we have with probability $1-\delta$,
\begin{equation}\label{eq:3c}
|\mu_i-\hat{\mu}_{i,t}| \leq \sqrt{\frac{2\log \frac{Kn_i^t(n_i^t+1)}{\delta}}{n_i^t}}, \forall t\in \mathbb{N}^+, i\in [K]    
\end{equation}

Next, we upper bound the number of recommendations in Phase-2 to achieve $K-1$ rejections. Suppose $i=1$ is the best arm, a sufficient condition for eliminating any arm $i>1$ at time $t$ is
\begin{equation}\label{eq:4c}
\begin{aligned}
& |\mu_{i}-\hat{\mu}_{i,t}|<\frac{\Delta_i}{4}, \sqrt{\frac{\Gamma(t;\rho,\alpha)}{n_i^t}}<\frac{\Delta_i}{4}, |\mu_{1}-\hat{\mu}_{1,t}|<\frac{\Delta_i}{4}, \sqrt{\frac{\Gamma(t;\rho,\alpha)}{n_1^t}}<\frac{\Delta_i}{4},    
\end{aligned}
\end{equation}
since the inequalities in Eq \eqref{eq:4c} imply $lcb_{1,t}-ucb_{i,t} > (\hat{\mu}_{1,t}-\frac{\Delta_i}{4})-(\hat{\mu}_{i,t}+\frac{\Delta_i}{4}) > (\mu_{1,t}-\frac{\Delta_i}{2})-(\mu_{i,t}+\frac{\Delta_i}{2}) = 0.$ Given Eq \eqref{eq:3c}, a sufficient condition for Eq \eqref{eq:4c} is 

\begin{equation}\label{eq:5c}
\begin{aligned}
& \sqrt{\frac{2\log \frac{Kn_i^t(n_i^t+1)}{\delta}}{n_i^t}}<\frac{\Delta_i}{4}, \sqrt{\frac{2\alpha\log \rho_1 n(t)}{n_i^t}}<\frac{\Delta_i}{4}, \\ & \sqrt{\frac{2\log \frac{Kn_1^t(n_1^t+1)}{\delta}}{n_1^t}}<\frac{\Delta_i}{4}, \sqrt{\frac{2\alpha\log \rho_1 n(t)}{n_1^t}}<\frac{\Delta_i}{4}.
\end{aligned}
\end{equation}

Because the system always recommends the arm with the least number of pulls in Phase-2, we have $n_1^t \geq n_i^t$ when $i$ is eliminated. Observe that functions $f_1(n)=\sqrt{\frac{2\log \frac{Kn(n+1)}{\delta}}{n}}, f_2(n)=\sqrt{\frac{2\alpha\log \rho_1 n}{n}}$ are both decreasing when $n\geq 2$, the first two equations in Eq \eqref{eq:5c} imply the last two. Hence, Eq \eqref{eq:5c} can be satisfied if $n_i^t$ satisfies 
\begin{equation}\label{eq:6c}
\sqrt{\frac{2\log \frac{Kn_i^t(n_i^t+1)}{\delta}}{n_i^t}}<\frac{\Delta_i}{4}, \sqrt{\frac{2\alpha\log \rho_1 n}{n_i^t}}<\frac{\Delta_i}{4}. 
\end{equation}

Next we upper bound $n(t)$ in the term $2\alpha\log \rho_1 n(t)$. Suppose Phase-2 starts at $t=t_1$, and $j$ is the arm with the largest number of pulls at the beginning of Phase-2. Considering the following cases:
\begin{enumerate}
    \item if $n_j^t=n_j^{t_1}$, we have $n_s^t\geq n_j^t$ for all $s\neq j$. Therefore, $n(t)\leq K n_j^t = K n_j^{t_1} <KN_1\leq  \frac{2K^{1+1/\alpha}}{\rho_0 \delta^{1/\alpha}}.$
    \item if $n_j^t > n_j^{t_1}$, then there exists $t_2$ in Phase-2 such that when $t=t_2$, each arm must have been pulled exactly $n_j^{t_1}$ times. After $t=t_2$, the system starts to recommend all the arms in a round-robin manner. In this situation, the number of pulls for any pair of arms differs at most 1. Therefore, $n(t) \leq K \max_{s\in[K]}n_s^t \leq K (n_i^t+1).$
\end{enumerate}

Therefore, we conclude $\Gamma(t;\rho,\alpha) \leq 2\alpha\log \rho_1 n(t) \leq \max\{2\alpha\log\frac{ 2\rho_1K^{1+1/\alpha}}{\rho_0 \delta^{1/\alpha}}, 2\log \rho_1K n_i^t\}$. Substituting it into the second inequality of Eq \eqref{eq:6c}, we obtain 
\begin{equation}\label{eq:7c}
\begin{aligned}
&
\sqrt{\frac{2\log \frac{Kn_i^t(n_i^t+1)}{\delta}}{n_i^t}}<\frac{\Delta_i}{4},
\sqrt{\frac{2\alpha\log \frac{2\rho_1K^{1+1/\alpha}}{\rho_0\delta^{1/\alpha}}}{n_i^t}}<\frac{\Delta_i}{4},  \sqrt{\frac{2\alpha\log \rho_1K n_i^t}{n_i^t}}<\frac{\Delta_i}{4}. 
\end{aligned}
\end{equation}

Solving $n_i^t$ in Eq \eqref{eq:7c}, we obtain a sufficient condition for Eq \eqref{eq:7c} to hold:

\begin{equation}\label{eq:8c}
\begin{aligned}
& n_i^t > \max  \left\{\frac{32\alpha}{\Delta_i^2}\log \frac{2\rho_1K^{1+1/\alpha}}{\rho_0\delta^{1/\alpha}}, \frac{64\alpha}{\Delta_i^2}\log \frac{64\rho_1K}{\Delta_i^2}, \frac{128}{\Delta_i^2}\log \frac{64\sqrt{2K}}{\Delta_i^2\sqrt{\delta}}\right\},
\end{aligned}
\end{equation}

which yields $n_i^t \sim O(\frac{\alpha}{\Delta_i^2}\log \frac{\rho_1K}{\rho_0\delta\Delta_i}).$ Since Phase-2 incurs at most $K-1$ rejections, we conclude that Algorithm \ref{alg:phase2_alg} terminates within $O(K+\sum_{i=1}^K \frac{\alpha}{\Delta_i^2}\log \frac{\rho_1K}{\rho_0\delta\Delta_i})$ steps with probability $1-\delta$.
\end{proof}

\subsection{Proof of Lemma \ref{lm:Er}}
\begin{proof}
To simplify our notations, we omit the parameters $\rho$ and $\alpha$ in $\Gamma(t;\rho,\alpha)$ in Lemma \ref{lm:Er} and its proof. Suppose in round $[t_s^{(r)}, t_e^{(r)}]$, arm $1,2,\cdots,K$ are rejected successively at $t_s^{(r)}\leq t_1 < t_2 < \cdots < t_K=t_e^{(r)}$. Let $i$ be the arm with the highest empirical mean at $t=t_s^{(r)}$, i.e., $f(t_s^{(r)})=\hat{\mu}_{i,t_s^{(r)}}$. We will show that when any arm $j$ gets rejected, its empirical mean will be smaller than $f(t_s^{(r)})-2\sqrt{\frac{\underline{\Gamma}^{(r)}}{n(t_e^{(r)})}}$. Since any arm will not be pulled in the same round after being rejected, we conclude $f(t_e^{(r)}) \leq f(t_s^{(r)}) - 2\sqrt{\frac{\underline{\Gamma}^{(r)}}{n(t_e^{(r)})}}$. To prove our claim, we consider the following three cases:

\begin{enumerate}
    \item For any $j<i$, when arm $j$ is rejected at $t=t_j$, arm $i$'s empirical mean has not changed since $t=t_s^{(r)}$, i.e., $\hat{\mu}_{i,t_j}=\hat{\mu}_{i,t_s^{(r)}}$. Assume $j$ is rejected by $j'$, i.e., $ucb_{j,t_j}<lcb_{j',t_j}$. Then we have $\hat{\mu}_{j,t_j} < \hat{\mu}_{j',t_j}-\sqrt{\frac{\Gamma^(t_j)}{n_j^{t_j}}}-\sqrt{\frac{\Gamma^(t_j)}{n_{j'}^{t_j}}} \leq \hat{\mu}_{i,t_j}-2\sqrt{\frac{\underline{\Gamma}^{(r)}}{n(t_e^{(r)})}} \leq f(t_s^{(r)})-2\sqrt{\frac{\underline{\Gamma}^{(r)}}{n(t_e^{(r)})}}$ for all $j<i$.
    \item  When arm $i$ is rejected at $t=t_i$, let $j'\neq i$ such that $ucb_{i,t_i}<lcb_{j',t_i}$, i.e., $\hat{\mu}_{i,t_i}+\sqrt{\frac{\Gamma^(t_i)}{n_i^{t_i}}} < \hat{\mu}_{j',t_i}-\sqrt{\frac{\Gamma^(t_i)}{n_{j'}^{t_i}}}$. Then we obtain $\hat{\mu}_{i,t_i} < \hat{\mu}_{j',t_i}-\sqrt{\frac{\Gamma^(t_i)}{n_i^{t_i}}}-\sqrt{\frac{\Gamma^(t_i)}{n_{j'}^{t_i}}} \leq \hat{\mu}_{j',t_i}-2\sqrt{\frac{\underline{\Gamma}^{(r)}}{n(t_e^{(r)})}} \leq f(t_s^{(r)})-2\sqrt{\frac{\underline{\Gamma}^{(r)}}{n(t_e^{(r)})}}$.
    \item For any $j > i$, when arm $j$ is rejected at $t=t_j$, assume $j$ is rejected by $j'$, i.e., $ucb_{j,t_j}<lcb_{j',t_j}$. Note that if $j'<j$, we have $\hat{\mu}_{j',t_j}<\hat{\mu}_{i,t_s^{(r)}}$, otherwise $j'$ cannot be rejected before $j$; if $j'>j$, we also have $\hat{\mu}_{j',t_j}<\hat{\mu}_{i,t_s^{(r)}}$ because $\hat{\mu}_{j',t_j}=\hat{\mu}_{j',t_s^{(r)}}$. Therefore, $\hat{\mu}_{j,t_j} < \hat{\mu}_{j',t_j}-\sqrt{\frac{\Gamma^(t_j)}{n_j^{t_j}}}-\sqrt{\frac{\Gamma^(t_j)}{n_{j'}^{t_j}}} \leq \hat{\mu}_{i,t_s^{(r)}}-2\sqrt{\frac{\underline{\Gamma}^{(r)}}{n(t_e^{(r)})}} = f(t_s^{(r)})-2\sqrt{\frac{\underline{\Gamma}^{(r)}}{n(t_e^{(r)})}}$ for all $j>i$. 
    
    As a result, we conclude that $\hat{\mu}_{j,t_j} <f(t_s^{(r)})-2\sqrt{\frac{\underline{\Gamma}^{(r)}}{n(t_e^{(r)})}}$ for any $j \in [K]$, which means $f(t_e^{(r)}) \leq f(t_s^{(r)}) - 2\sqrt{\frac{\underline{\Gamma}^{(r)}}{n(t_e^{(r)})}}$.
\end{enumerate}

\end{proof}
\subsection{Proof of Lemma \ref{lm:algo4}}
\begin{proof}
First of all, observe that given any $N_1$, Algorithm \ref{alg:phase1_alg} must terminate because at least one arm will be accepted in each round. Therefore, for any $t>0$, the system can collect at least $\lfloor \frac{t}{K+1} \rfloor$ acceptances in the first $t$ steps. Suppose the initialization stage ends at $t=t_0$, and without loss of generality we start to index the rounds at $t_s^{(1)}=\max\{t_0, \frac{e^{1/2\alpha}(K+1)}{\rho_0}\}$. Then before $t=t_s^{(1)}$, the system will be rejected at most $\max\{K,\frac{e^{1/2\alpha}K}{\rho_0}\}$ times and be accepted at least $\frac{e^{1/2\alpha}}{\rho_0}$ times. Hence, we have $\Gamma(t)\geq 2\alpha\log [\rho_0 n(t)] \geq 2\alpha\log [\rho_0 \frac{e^{1/2\alpha}}{\rho_0}]=1, \forall t>t_s^{(1)}$.

Suppose Algorithm \ref{alg:phase1_alg} terminates in the $(M+1)$-th round denoted by $\{[t_s^{(r)}, t_e^{(r)}]\}_{r=1}^M$, where $t_s^{(1)}<t_e^{(1)}=t_s^{(2)}<t_e^{(2)}=t_s^{(3)}<\cdots<t_e^{(M)}\leq N_1.$ Next, we derive the upper bound of $M$ and the upper bound for the total number of rejections is thus given by $\max\{K,\frac{e^{1/2\alpha}K}{\rho_0}\}+KM \sim O(KM)$. From Lemma \ref{lm:Er}, we have 
\begin{align}
\notag
    f(t_e^{(M)}) & \leq f(t_s^{(M)}) - 2\sqrt{\frac{\Gamma(t_s^{(M)})}{n(t_e^{(M)})}} \\ \notag
    & \leq f(t_s^{(M)}) - 2\sqrt{\frac{1}{N_1}}\\  \notag
    & \leq f(t_s^{(M-1)}) - 2\cdot 2\sqrt{\frac{1}{N_1}}\\ \notag
    & \leq \cdots \\\label{eq:2mn}
    & \leq f(t_s^{(1)}) - 2M\sqrt{\frac{1}{N_1}}.
\end{align}

Suppose $f(t_s^{(1)}) = \hat{\mu}_{i,t_s^{(1)}}, f(t_e^{(M)})=\hat{\mu}_{j,t_e^{(M)}}\geq \hat{\mu}_{i,t_e^{(M)}}$. From Eq \eqref{eq:2mn} we obtain 

\begin{equation}\label{eq:2mn2}
     \hat{\mu}_{i,t_s^{(1)}} - \hat{\mu}_{i,t_e^{(M)}} \geq \frac{2M}{\sqrt{N_1}}.
\end{equation}

From Lemma \ref{lm:iteratedlog}, we know that with probability $1-\frac{\delta}{K}$, $$\mathbb{P}\Big(\forall t\in \mathbb{N}^+: |\hat{\mu}_{i,t}-\mu_i| \leq \sqrt{2\log \frac{2K}{\delta}} \Big) > 1-\frac{\delta}{K}.$$ Therefore, with probability $1-\delta$, 

\begin{equation}\label{eq:2mn3}
     \hat{\mu}_{i,t_s^{(1)}} - \hat{\mu}_{i,t_e^{(M)}} \leq 2\cdot\sqrt{2\log \frac{2K}{\delta}}.
\end{equation}

Eq \eqref{eq:2mn2} and \eqref{eq:2mn3} provide the following 
\begin{equation}\label{eq:2mn4}
\frac{2M}{\sqrt{N_1}}\leq 2\cdot\sqrt{2\log \frac{2K}{\delta}}.
\end{equation}

Rearranging Eq \eqref{eq:2mn4} yields
$$M \leq \sqrt{2N_1\log\frac{2K}{\delta}},$$ 
which completes the proof.
\end{proof}

\section{Proof of Theorem \ref{thm:upperbound2}}
\begin{proof}
Let $N_1=\frac{1}{\rho_0}\cdot\Big(\frac{2K}{\delta}\Big)^{\frac{1}{\alpha}}$ in Lemma \ref{lm:algo4}, we can upper bound the number of rejections in Phase-1 by $O\left(K^{1+\frac{1}{2\alpha}}\delta^{-\frac{1}{2\alpha}}\sqrt{\log \frac{K}{\delta}}\right)$ with probability $1-\delta$. From Lemma \ref{lm:phase2bound2}, we can upper bound the number of recommendations in Phase-2 by $O(K+\sum_{i=1}^K \frac{\alpha}{\Delta_i^2}\log \frac{\rho_1K}{\rho_0\delta\Delta_i})\sim O(\frac{\alpha K}{\Delta_1^2}\log \frac{K}{\delta\Delta_1})$ with probability $1-\delta$. Combining the results from Phase-1 and Phase-2, we complete the proof of Theorem \ref{thm:upperbound2}.
\end{proof}
\section{Proof of Theorem \ref{thm:lowerbound4delta}}

\begin{proof}\label{appendix:proof4lowerbound}
First we prove that the system needs at least $\frac{2}{\Delta_1^2}\log \frac{1}{4\delta}$ acceptances to identify the best arm with probability $1-\delta$. We need the general lower bound result for best arm identification with fixed confidence \cite{garivier2016optimal}, which is stated below.

Let $\mathcal{S} = \{\nu=(\mu_1,\cdots, \mu_K): \exists i_* \text{~s.t.~} \mu_{i_*} > \max\{\mu_i:i\neq i_*\}\}$ be the set of $K$-armed stochastic bandit instances where the reward distribution for each arm $i$ follows $N(\mu_i,1)$. Let Alt$(\nu)=\{\nu' \in \mathcal{S}:\max(\nu) \neq \max(\nu')\}$ and $\Sigma_K=\{\omega\in \mathbb{R}_+^K:\omega_1+\cdots+\omega_K=1\}$ be the set of probability distribution on $[K]$. From Theorem 1 in \cite{garivier2016optimal}, we know that for any policy $\pi$ that can identify the best arm with probability $1-\delta$ for any $\nu \in \mathcal{S}$, the expected stopping time $\tau_{\delta}$ of $\pi$ on any $\nu$ satisfies 
\begin{equation*}
    \mathbb{E}_{\nu}[\tau_{\delta}]\geq T^*(\nu)\log \frac{1}{4\delta},
\end{equation*}
where $T^*(\nu)^{-1}=\sup_{\omega \in \Sigma_K}\inf_{\nu' \in \text{Alt}(\nu)}\Big(\sum_{i=1}^K \omega_i D(\mu_i, \mu'_i)\Big)$, and $D(\mu_i, \mu'_i)$ denotes the Kullback-Leibler divergence of two distributions $N(\mu_i, 1), N(\mu'_i, 1)$. 

To derive an explicit lower bound, we assume $\mu_1>\mu_2\geq \cdots\geq \mu_K$ for $\nu$, and let $\nu'$ satisfy $\mu'_1=\mu_2-\epsilon $ and $\mu'_i=\mu_i, i \neq 1$. Then we have $\nu' \in$ Alt $(\nu)$ and 
\begin{align*}
    T^*(\nu)^{-1} & =\sup_{\omega \in \Sigma_K}\inf_{\nu' \in \text{Alt}(\nu)}\Big(\sum_{i=1}^K \omega_i D(\mu_i, \mu'_i)\Big) \\
    & \leq \sup_{\omega \in \Sigma_K}\inf_{\epsilon>0}\Big( \omega_1 D(\mu_1, \mu_2-\epsilon)\Big)\\
    & = \inf_{\epsilon>0} \frac{(\Delta_1+\epsilon)^2}{2} = \frac{\Delta_1^2}{2}. \\
\end{align*}
Therefore, we obtain
\begin{equation}\label{eq:lowertau}
\mathbb{E}_{\nu}[\tau_{\delta}]\geq \frac{2}{\Delta_1^2}\log \frac{1}{4\delta}.
\end{equation}

Suppose there exists an algorithm $\pi$ for the system such that for any $\delta \in (0,1)$ and $\nu \in \mathcal{S}$, $\pi$ can find the best arm with probability $1-\delta$ with less than $N_0 = \frac{2}{\Delta_1^2}\log \frac{1}{4\delta}$ accepted recommendations. Then, the user can also run algorithm $\pi$ on her side to identify the best arm with less than $\frac{2}{\Delta_1^2}\log \frac{1}{4\delta}$ pulls, which contradicts the lower bound given in Eq \eqref{eq:lowertau}. As a result, $N_0 \geq \frac{2}{\Delta_1^2}\log \frac{1}{4\delta}$ is required.

Next, we prove that if $N_0 < \frac{\delta^{-\frac{1}{\alpha}+c}}{\rho_0}$, there exists a problem instance such that the system must make mistake about the best arm with probability at least $1-\delta$. Consider the following problem instances,
\begin{align*}
    &\nu=(1+\epsilon-d, 1,-\frac{1}{\delta},\cdots,-\frac{1}{\delta}),\\
&\nu'=(1+\epsilon, 1,-\frac{1}{\delta},\cdots,-\frac{1}{\delta}),
\end{align*}
where $d=\sqrt{2\alpha\log\rho_1 N_0} (1+2\sqrt{\frac{1}{N_0-K+1}})+2\sqrt{\frac{\log 2(N_0-K+1)}{N_0-K+1}}, \epsilon=\sqrt{\frac{2\alpha\log\rho_0 N_0}{N_0-K+1}}$. Note that $N_0 \geq K$ since the first $K$ recommendations will always be accepted. Given this construction, we know the best arms for $\nu,\nu'$ are $2$ and $1$, respectively. Since only accepted recommendations affect the user's empirical reward estimation, we index the empirical means $\hat{\mu}_i$ with subscript $n$ (the number of acceptances so far) instead of $t$.  

We argue that for any policy $\pi$ that has made $N_0$ recommendations, $\pi$ will make mistake on either $\nu$ or $\nu'$ with probability greater than $\delta$. We will prove this claim by showing that when $\pi$ is applied to $\nu$ and $\nu'$, the induced empirical histories will overlap with probability larger than $2\delta$, and thus conclude $\pi$ is unable to distinguish $\nu$ and $\nu'$ within probability threshold $\delta$.

Specifically, we show that for sufficiently small $\delta$, with probability at least $2\delta$, any arm except arm 2 can only be pulled once during the first $N_0$ acceptances in both $\nu$ and $\nu'$. To see this, we first lower bound the probability of the following events for both $\nu$ and $\nu'$:
\begin{enumerate}
    \item Let event $A= \left\{\forall 1\leq n\leq N_0, i\in\{2,\cdots, K\}: |\hat{\mu}_{i,n} -\mu_{i}| < 2\sqrt{\frac{\log 2n_i}{n_i}} \right\}$.
  We claim $\mathbb{P}[A] > \frac{1}{2^{K-1}}$, because from Lemma \ref{lm:iteratedlog} we have
\begin{align*}
  \mathbb{P}[A]   &    = \prod_{i=2}^K  \mathbb{P}\Big[\forall 1\leq n\leq N_0: |\hat{\mu}_{i,n} -\mu_{i}| < 2\sqrt{\frac{\log 2n_i}{n_i}} \Big] \\
    & >  \prod_{i=2}^K  \mathbb{P}\Big[\forall n\in \mathbb{N}^+: |\hat{\mu}_{i,n} -\mu_{i}| < \sqrt{\frac{2\log 2n_i(n_i+1)}{n_i}} \Big] \\ 
    & > \prod_{i=2}^K \frac{1}{2} = \frac{1}{2^{K-1}}.
\end{align*}
    \item Define events $B=\{\hat{\mu}_{1,1} < 1+\epsilon-d\}$ and $B'=\{\hat{\mu}'_{1,1} < 1+\epsilon-d\}$. We claim $\mathbb{P}[B]>\mathbb{P}[B']>2^K \delta$, since we can directly lower bound $\mathbb{P}[B']$ as below:

\begin{small}
    \begin{align}
\label{eq:feller1968}
\notag
& \, \, \mathbb{P}[B'] =  \mathbb{P}[\hat{\mu}'_1-\mu'_1 < -d]\\ 
\geq &  \, \,   \frac{1}{\sqrt{2\pi}}\Big(\frac{1}{d}-\frac{1}{d^3}\Big)\cdot\exp\Big\{{-\frac{1}{2}d^2}\Big\}\\ \notag
= & \, \, \frac{1}{\sqrt{2\pi}}\Big(\frac{1}{d}-\frac{1}{d^3}\Big) \cdot\exp\Big\{{-\frac{1}{2}\Big(\sqrt{2\alpha\log\rho_1 N_0} (1+\frac{2}{\sqrt{N_0-K+1}})+2\sqrt{\frac{\log 2(N_0-K+1)}{N_0-K+1}}\Big)^2}\Big\}  \\  \label{eq:146}
> & \, \, \frac{1}{\sqrt{2\pi}}\Big(\frac{1}{d}-\frac{1}{d^3}\Big)\cdot\exp\Big\{{-\frac{1}{2}\Big(\sqrt{2\alpha\log\rho_1 N_0} +N_0^{-\frac{1}{4}}\Big)^2}\Big\}\\ \notag
= & \, \, \frac{1}{\sqrt{2\pi}}\Big(\frac{1}{d}-\frac{1}{d^3}\Big) \cdot\exp\Big\{{-\alpha\log\rho_1 N_0 }\Big\}\cdot\exp\Big\{{-N_0^{-\frac{1}{4}} \sqrt{2\alpha\log\rho_1 N_0}}\Big\}\cdot\exp\Big\{{-\frac{1}{2}N_0^{-\frac{1}{2}} }\Big\} \\ \label{eq:147}
\geq & \, \, \frac{1}{\sqrt{2\pi}}\Big(\frac{1}{2d}\Big) \cdot \frac{\rho_0^{\alpha}}{\rho_1^{\alpha}}\delta^{1-\alpha c} \cdot \exp\Big\{{-1}\Big\} \cdot \exp\Big\{{-\frac{1}{2} }\Big\} \\ \label{eq:148}
= & \, \, \frac{\rho_0^{\alpha}\exp{(-1.5)}}{8\sqrt{\pi}\rho_1^{\alpha}\cdot \delta^{\alpha c}\cdot\sqrt{\alpha \log \rho_1 N_0}} \cdot \delta> 2^K\delta. 
\end{align}
\end{small}
Note that Eq \eqref{eq:feller1968} holds because of the tail bounds theorem for Gaussian distribution \cite{feller1966introduction}, and Eq \eqref{eq:146},\eqref{eq:147},\eqref{eq:148} hold because we can choose sufficiently small $\delta$ such that $  2\sqrt{\frac{2\alpha\log\rho_1 N_0}{N_0-K+1}}+2\sqrt{\frac{\log 2(N_0-K+1)}{N_0-K+1}}<N_0^{-\frac{1}{4}}, N_0^{-\frac{1}{4}} \sqrt{2\alpha\log\rho N_0}<1, N_0^{-\frac{1}{2}}<1, $ and $\frac{\rho_0^{\alpha}\exp{(-1.5)}}{8\sqrt{\pi}\rho_1^{\alpha}\cdot \delta^{\alpha c}\cdot\sqrt{\alpha \log \rho_1 N_0}}>2^K$. Since $\mu'_1>\mu_1$, we further obtain $\mathbb{P}[B]>\mathbb{P}[B']>2^K \delta$.
\end{enumerate}

Since events $A, B, B'$ are independent, we have $\mathbb{P}[A \cap B]> 2\delta$ and $\mathbb{P}[A\cap B']> 2\delta$. Next we show that conditioned on the event $A \cap B$ or $ A\cap B'$, all the arms except arm 2 can only be pulled once. 

For any arm $i \in \{3,\cdots, K\}$ and any $t$ such that $1\leq n(t) \leq N_0$ (we use $n$ to denote $n(t)$ in the remaining proof for simplicity), we have 
  \begin{align} \notag
   & lcb_{2,t}-ucb_{i, t} \\ \notag =& \hat{\mu}_{2,n}-\sqrt{\frac{2\alpha\log \rho_t n}{n_2}} - \hat{\mu}_{i,n}-\sqrt{\frac{2\alpha\log \rho_t n}{n_i}} \\ \notag
     >& (1 - 2\sqrt{\frac{\log 2n_2}{n_2}}) -\sqrt{\frac{2\alpha\log \rho_t n}{n_2}} - (-\frac{1}{\delta} + 2\sqrt{\frac{\log 2n_i}{n_i}}) -\sqrt{\frac{2\alpha\log \rho_t n}{n_i}} \\  \notag
     >& \frac{1}{\delta}+1- 2\Big(2 \sqrt{\log 2N_0} +  \sqrt{2\alpha\log \rho_t N_0} \Big) \\ \label{eq:161}
     >& \frac{1}{\delta} -O\Big( \sqrt{\log \delta^{-1+\alpha c}} \Big) > 0, \delta \xrightarrow[]{} 0.
\end{align}
Eq \eqref{eq:161} means when $\delta$ is sufficiently small, arm $3,\cdots,K$ will always be rejected because of arm 2 during the first $N_0$ steps and thus can only be pulled once. 

For arm 1, suppose it has been pulled only once during the first $n-1$ steps, we show it will also be rejected at the $n$'th step. First, observe that the upper confidence bound for arm 1 at the $n$'th step satisfies
\begin{equation}\label{eq:ucbarm1}
    ucb_{1,n} \leq 1+\epsilon-d+\sqrt{2\alpha\log \rho_t n}.
\end{equation}
Since arm $1,3,\cdots,K$ are only pulled once, we have $n_2=n-K+1$, and the lower confidence bound for arm 2 at the $n$'th step thus satisfies
\begin{equation}\label{eq:ucbarm1}
    lcb_{2, n}\geq 1-2\sqrt{\frac{\log 2n_2}{n_2}}-\sqrt{\frac{2\alpha\log \rho_t n}{n_2}}.
\end{equation}
Therefore, 

\begin{align} \notag
   & lcb_{2,t}-ucb_{1,t} \\ \notag  = & 1-2\sqrt{\frac{\log 2(n-K+1)}{n-K+1}}-\sqrt{\frac{2\alpha\log \rho_t n}{n-K+1}} - (1+\epsilon-d+\sqrt{2\alpha\log \rho_t n}) \\ \notag
    =& d-\epsilon- \Big(\sqrt{2\alpha\log \rho_t n} +2\sqrt{\frac{\log 2(n-K+1)}{n-K+1}} + \sqrt{\frac{2\alpha\log \rho_t n}{n-K+1}} \Big) \\ \notag
     >& \Big\{\Big(\sqrt{2\alpha\log\rho_t N_0} (1+\sqrt{\frac{1}{N_0-K+1}})+2\sqrt{\frac{\log 2(N_0-K+1)}{N_0-K+1}}\Big) \\&- \Big(\sqrt{2\alpha\log\rho_t n} (1+\sqrt{\frac{1}{n-K+1}})+2\sqrt{\frac{\log 2(n-K+1)}{n-K+1}}\Big) \Big\}\nonumber \\ \label{eq:162}
     \triangleq & f(N_0;\rho_t,\alpha)-f(n;\rho_t,\alpha) > 0,
\end{align}
where Eq \eqref{eq:162} holds because we can verify that for any fixed $\rho_t \in [\rho_0, \rho_1], \alpha>0$, function $f(n;\rho_t,\alpha)$ is increasing and goes to infinity as $\delta \xrightarrow[]{} 0, n \xrightarrow[]{} +\infty$. Therefore, we can choose a sufficiently small $\delta$ such that $N_0$ is sufficient large and $f(N_0;\rho_t,\alpha)-f(n;\rho_t,\alpha)>0$ holds for all $n<N_0$.

Now we have shown that when $\pi$ is executed on either $\nu$ or $\nu'$, with probability at least $2\delta$, only arm 2 will be accepted more than once within the first $N_0$ accepted recommendations. Under this circumstance, the observations from $\nu$ and $\nu'$ are completely indistinguishable. Therefore, if the system has to output a guess for the best arm between $1$ and $K$, it is bound to make a mistake with probability larger than $\delta$, which completes the proof.
\end{proof}

\end{document}